\newcommand{\rbf}{\metric_{\scalebox{0.5}{$\mathrm{RBF}$}}}
\newcommand{\LAND}{\metric_{\scalebox{0.5}{$\mathrm{LAND}$}}}
\newcommand{\OTLAND}{\text{OT-MFM}_{\scalebox{0.5}{$\mathrm{LAND}$}}}
\newcommand{\ILAND}{\text{I-MFM}_{\scalebox{0.5}{$\mathrm{LAND}$}}}
\newcommand{\OTRBF}{\text{OT-MFM}_{\scalebox{0.5}{$\mathrm{RBF}$}}}
\newcommand{\IRBF}{\text{I-MFM}_{\scalebox{0.5}{$\mathrm{RBF}$}}}
\def\1{\bm{1}}
\DeclareMathAlphabet{\mathsfit}{\encodingdefault}{\sfdefault}{m}{sl}
\SetMathAlphabet{\mathsfit}{bold}{\encodingdefault}{\sfdefault}{bx}{n}
\def\gD{{\mathcal{D}}}
\def\gE{{\mathcal{E}}}
\def\gK{{\mathcal{K}}}
\def\gL{{\mathcal{L}}}
\def\gM{{\mathcal{M}}}
\def\gU{{\mathcal{U}}}
\newcommand{\R}{\mathbb{R}}
\DeclareMathOperator*{\argmin}{arg\,min}
\newtheorem{theorem}{Theorem}[section]
\newtheorem{definition}[theorem]{Definition}
\newcommand{\xhdr}[1]{{\noindent\bfseries #1}.}
\newcommand{\cut}[1]{}
\newcommand{\model}[0]{\textsc{Metric Flow Matching}\xspace}
\newcommand{\metric}{g}
\newcommand{\Metric}{\mathbf{G}}
\definecolor{orange_fig1}{RGB}{243,151,0}
\definecolor{violet_fig1}{RGB}{153,40,200}
\definecolor{red_fig1}{RGB}{255,18,18}
\definecolor{blue_fig1}{RGB}{24,144,195}
\renewcommand*{\backrefalt}[4]{%
    \ifcase #1 \footnotesize{(Not cited.)}%
    \or        \footnotesize{(Cited on page~#2)}%
    \else      \footnotesize{(Cited on pages~#2)}%
    \fi}
\newcolumntype{P}[1]{>{\centering\arraybackslash}p{#1}}
\newcommand\blfootnote[1]{%
  \begingroup
  \renewcommand\thefootnote{}\footnote{#1}%
  \addtocounter{footnote}{-1}%
  \endgroup
}
\declaretheorem[name=Theorem,numberwithin=section]{thm}
\declaretheorem[name=Proposition]{prop}
\declaretheorem[name=Definition]{defn}
\definecolor{bluegray}{rgb}{0.4, 0.6, 0.8}
\definecolor{electriclime}{rgb}{0.8, 1.0, 0.0}
\definecolor{malachite}{rgb}{0.04, 0.85, 0.32}
\definecolor{darkred}{rgb}{0.55, 0.0, 0.0}
\definecolor{darkblue}{rgb}{0.0, 0.0, 0.55}
\definecolor{darkgreen}{rgb}{0.0, 0.2, 0.13}
\definecolor{darkorchid}{rgb}{0.6, 0.2, 0.8}
\definecolor{plt_green}{HTML}{008000}
\definecolor{plt_yellow}{HTML}{FFA500}
\definecolor{plt_red}{HTML}{FF0000}
\title{Metric Flow Matching for Smooth Interpolations on the Data Manifold}
\author{%
    Kacper Kapu\'sniak$^{1}$\thanks{Corresponding author: \texttt{kacper.kapusniak@cs.ox.ac.uk}},~
    Peter Potaptchik$^{1}$,~
    Teodora Reu$^{1}$,~
    Leo Zhang$^{1}$, \\
    \textbf{Alexander Tong$^{2,3}$,~
    Michael Bronstein$^{1, 4}$,~
    Avishek Joey Bose$^{1,2}$,~
    Francesco Di Giovanni$^{1}$}\\
    $^1$University of Oxford, $^2$Mila, $^3$Université de Montréal, $^4$AITHYRA
}
\begin{document}

\maketitle

\begin{abstract}
\looseness=-1
Matching objectives underpin the success 
of modern generative models 
and rely on constructing conditional paths that transform a source distribution into a target distribution. Despite being a fundamental building block, conditional paths have been designed principally under the assumption of \emph{Euclidean geometry}, resulting in straight interpolations. However, this can be particularly restrictive for tasks such as trajectory inference, where straight paths might lie outside the data manifold, thus failing to capture the underlying dynamics giving rise to the observed marginals. 
In this paper, we propose \model (MFM), a novel simulation-free framework for conditional flow matching where interpolants are approximate geodesics learned by minimizing the kinetic energy of a data-induced Riemannian metric. This way, the generative model matches vector fields on the data manifold, which corresponds to lower uncertainty and more meaningful interpolations. We prescribe general metrics to instantiate MFM, independent of the task, and test it on a suite of challenging problems including 
LiDAR navigation, unpaired image translation, and modeling cellular dynamics. We observe that MFM outperforms the Euclidean baselines, particularly achieving SOTA on single-cell trajectory prediction.\blfootnote{Code is available at \href{https://github.com/kksniak/metric-flow-matching.git}{https://github.com/kksniak/metric-flow-matching}}

\end{abstract}

\section{Introduction}\label{sec:intro}
\looseness=-1
A central task in many natural and scientific domains entails the inference of system dynamics of an underlying (physical) process from noisy measurements. A core challenge, in these application domains such as biomedical ones---e.g. tracking health metrics~\citep{oeppen2002broken} or diseases~\citep{hay2021estimating}---is that one typically lacks access to entire time-trajectories and can only leverage cross-sectional samples. An even more poignant example is the case of single-cell RNA sequencing \citep{macosko2015highly, klein2015droplet}, where measurements are {\em sparse} and {\em static}, due to the procedure being expensive and destructive. Consequently, the nature of these tasks 
demands the design of frameworks capable of reconstructing the temporal dynamics of a system (e.g. cells) from observed time marginals that contain finite samples. This overarching problem specification is referred to as {\bf trajectory inference}~\citep{hashimoto2016learning, lavenant2021towards}.

\looseness=-1
To address this challenge, we rely on matching objectives, a powerful generative modeling paradigm encompassing successful approaches including diffusion models~\citep{sohl2015deep,song2020score}, flow matching~\citep{lipman_flow_2022, liu2022flow, albergo2022building}, and finding a Schr\"odinger Bridge~\citep{schrodinger1932theorie, leonard2013survey}. 
Specifically, to reconstruct the unknown dynamics $t\mapsto p_t^*$ between observed time marginals $p_0$ and $p_1$, we leverage Conditional Flow Matching (CFM) \citep{tong2023improving}, a simulation-free framework 
which constructs a probability path $p_t$ 
through interpolants $x_t$ connecting samples of $p_0$ to samples of $p_1$. 
In general, $x_t$ is designed under the assumption of {\em Euclidean geometry}, resulting in {\em straight} trajectories. 
However, in light of the widely accepted ``manifold hypothesis'' \citep{tenenbaum2000global, belkin2003laplacian}, the target time-evolving density $p_t^*$ is supported on a 
{\em curved} low-dimensional manifold $\gM\subset \R^d$---a condition satisfied by cells in the space of gene expressions \citep{moon2018manifold}. As such, straight interpolants stray away from the data manifold $\gM$, leading to reconstructions $p_t$ that fail to model the nonlinear dynamics generated by the underlying process.   





\looseness=-1
\xhdr{Present work} We aim to design interpolants $x_t$ that stay on the data manifold $\gM$ associated with the underlying dynamics. Nonetheless, parameterizing the lower-dimensional manifold $\gM$ is prone to instabilities \citep{loaiza2022diagnosing} and may require multiple coordinate systems \citep{salmona2022can}. Accordingly, we 
adopt the ``metric learning'' approach \citep{xing2002distance, hauberg2012geometric}, where 
we still work in the ambient space $\R^d$, but equip it with a {\em data-dependent Riemannian metric $\metric$} whose shortest-paths (geodesics) stay close to the data points, and hence to $\gM$ 
\citep{arvanitidis2021geometrically}. We introduce \model (MFM), a simulation-free generalization of CFM 
where interpolants $x_t$ are {\em learned} by minimizing a geodesic loss that penalizes the velocity measured by the metric $\metric$. As a result, $x_t$ approximates the geodesics of $\metric$ and hence tends towards the data, leading to the evaluation of the matching objective in regions of lower uncertainty. Therefore, the resulting probability path $p_t$ is supported on the data manifold for all $t\in [0,1]$, giving rise to a more natural reconstruction of the underlying dynamics in the trajectory inference task, as depicted in \Cref{fig:fig1}. 

\begin{figure}[t]
    \centering
    \includegraphics[width=0.32\textwidth]{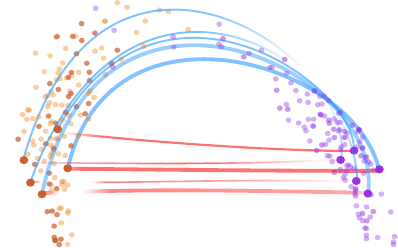} 
    \hspace{0.1\textwidth}
    \includegraphics[width=0.32\textwidth]{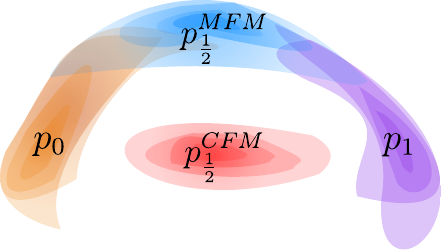}
         \caption{In \textcolor{orange_fig1}{orange} and \textcolor{violet_fig1}{violet} the source and target distributions. On the left, \textcolor{red_fig1}{straight interpolations} vs \textcolor{blue_fig1}{interpolations following a data-dependent Riemannian metric}. On the right, densities of reconstructed marginals at time $t=\frac{1}{2}$, using  \textcolor{red_fig1}{Conditional Flow Matching} and \textcolor{blue_fig1}{\model (MFM)}, respectively. MFM provides a more meaningful reconstruction supported on the data manifold.}\vspace{-10pt}
        \label{fig:fig1}
\end{figure}

Our {\bf main contributions} are:
\begin{enumerate}[label=(\arabic*),left=0pt,nosep]
\item We prove that given a dataset $\gD\subset \R^d$, one can always construct a metric $\metric$ on $\R^d$ such that the geodesics connecting $x_0$ sampled from $p_0$ to $x_1$ sampled from $p_1$, always lie close to $\gD$ (\S\ref{sec:metriclearn}). 
\item We propose \model, a novel framework generalizing CFM to the Riemannian manifold associated with any data-dependent metric $\metric$. In MFM, before training the matching objective, we learn interpolants that stay close to the data by minimizing a cost induced by $\metric$ 
(\S\ref{sec:metric_flow_matching}). MFM is simulation-free and stays relevant when geodesics lack closed form and hence Riemannian Flow Matching~\citep{chen2023riemannian} is not easily applicable  (\S\ref{sec:MFM}).
\item We prescribe a universal way of designing a data-dependent metric, independent of the specifics of the task, which enforces interpolants $x_t$ to stay supported on the data manifold (\S\ref{sec:rbf}). Through the proposed metric, $x_t$ depends on the entire data manifold and not just the endpoints $x_0$ and $x_1$ sampled from the marginals. By accounting for the Riemannian geometry induced by the data, MFM generalizes existing approaches that construct $x_t$ by minimizing energies (\S\ref{sec:comparisons}). 
\item We propose OT-MFM, an instance of MFM that relies on Optimal Transport to draw samples from the marginals (\S\ref{sec:metric}).  
Empirically, we show that OT-MFM attains SOTA results for reconstructing single-cell dynamics~(\S\ref{sec:exp}). Additionally, we validate the versatility of the framework 
through tasks such as 3D navigation with LiDAR point clouds and unpaired translation of images. 
\end{enumerate}

\section{Preliminaries and Setting}\label{sec:preliminaries}
\looseness=-1
We review 
Conditional Flow Matching, which forms the basis of \model~\S\ref{sec:metric_flow_matching}. Next, we recall basic notions of Riemannian geometry, with emphasis on constructing 
geodesics. 

\looseness=-1
\xhdr{Notation and convention} We let $\R^d$ be the ambient space where data points are embedded. A random variable $x$ with a distribution $p$ is denoted as $x\sim p(x)$. A function $\varphi$ depending on time $t$, space $x$ and learnable parameters $\theta$, is denoted by $\varphi_{t,\theta}(x)$, and its time derivative by $\dot{\varphi}_{t,\theta}$. We also let $\delta_{x_0}(x)$ be the Dirac function centered at $x_0$ and assume that all 
distributions 
are absolutely continuous, which allows us to use densities. We denote the space of symmetric, positive definite $d\times d$ matrices as ${\rm SPD}(d)$, and let $\Metric(x)\in {\rm SPD}(d)$ be the coordinate representation of a Riemannian metric at some point $x$.

\looseness=-1
\xhdr{Conditional Flow Matching} 
We consider a source distribution $p_0$ and a target distribution $p_1$ defined on $\R^d$. We are interested in finding a map $f$ that pushes forward $p_0$ to $p_1$, i.e. $f_\#p_0 = p_1$. In line with Continuous Normalizing Flows \citep{chen2018neural}, we look for a map of the form $f = \psi_{1}$, where the time-dependent diffeomorphism $\psi_t:\R^d\rightarrow\R^d$ is the flow generated by a vector field $u_t$, i.e. $d\psi_t(x)/dt = u_t(\psi_t(x))$, with $\psi_0(x) = x$, for all $x\in\R^d$.
\looseness=-1
If we define the density path $p_t = [\psi_t]_\# p_0$, then $p_t$ and $u_t$ satisfy the continuity equation and we say that $p_t$ is {\em generated} by $u_t$. 

\looseness=-1
If density path $p_t$ and vector field $u_t$ are known, we could regress a vector field $v_{t,\theta}$, modeled as a neural network, to $u_t$ by minimizing $\mathcal{L}_{\rm FM}(\theta) = \mathbb{E}_{t,p_t(x)}\| v_{t,\theta}(x) - u_t(x) \|^2$. 
Since $p_t$ and $u_t$ are typically intractable though, Conditional Flow Matching (CFM) \citep{lipman_flow_2022, albergo2022building, liu2022flow} simplifies the problem by assuming that $p_t$ is a mixture of conditional paths: 
$p_t(x) = \int p_t(x|z)q(z)dz$, where $z = (x_0,x_1)$ is sampled from a joint distribution $q$ with marginals $p_0$ and $p_1$, and $p_t(x|z)$ satisfy $p_0(x|z)\approx \delta_{x_0}(x)$ and $p_1(x|z) \approx \delta_{x_1}(x)$.  
If $\psi_t(x|x_0,x_1)$ denotes the flow generating $p_t(x|z)$, then the CFM objective is
\begin{equation}\label{eq:CFM_flow}
    \mathcal{L}_{\rm CFM}(\theta) = \mathbb{E}_{t,(x_0,x_1)\sim q}\left\|v_{t,\theta}(x_t) - \dot{x}_t \right\|^2,
\end{equation}
\looseness=-1
where $x_t:= \psi_t(x_0|x_0,x_1)$ are the {\em interpolants} from $x_0$ to $x_1$. Since $\mathcal{L}_{\rm FM}$ and $\mathcal{L}_{\rm CFM}$ have same gradients \citep{lipman_flow_2022, tong2023improving}, we can use the tractable conditional paths to learn $v_\theta$. 
As in~\S\ref{sec:metric_flow_matching} 
we design $x_t$ to approximate geodesics, 
we review key notions from Riemannian geometry. 

\looseness=-1
\xhdr{Riemannian manifolds} A Riemannian manifold $(\mathcal{M},\metric)$ is a smooth orientable manifold $\mathcal{M}$ equipped with a smooth map $\metric$ assigning to each point $x$ an inner product $\langle \cdot,\cdot\rangle_\metric$ defined on the tangent space of $\mathcal{M}$ at $x$. We let $\Metric(x) \in {\rm SPD}(d)$ be the matrix representing $\metric$ in coordinates, at any point $x$, with $d$ the dimension of $\gM$. 
Integration is taken with respect to the volume form $d{\rm vol}$ (see Appendix~\S\ref{app:primer} for details).  
A continuous, positive function $p$ is then a probability density on $(\mathcal{M},\metric)$, i.e. $p\in\mathbb{P}(\mathcal{M})$, if $\int p(x)d{\rm vol}(x) = 1$. Naturally, it is possible to define curves $\gamma_t$, indexed by time $t$. A {\em geodesic} is then the curve $\gamma^*_t$ that minimizes the distance with respect to $\metric$. Specifically, the geodesic connecting $x_0$ to $x_1$ in $\gM$, can be found by minimizing the length functional:
\begin{equation}\label{eq:geodesic}
    \gamma^*_t = \argmin_{\gamma_t: \, \gamma_0 = x_0, \gamma_1 = x_1} \int_0^1 \| \dot{\gamma}_t\|_{\metric(\gamma_t)}\,dt, \quad  \| \dot{\gamma}_t\|_{\metric(\gamma_t)}:= \sqrt{\langle \dot{\gamma_t}, \Metric(\gamma_t)\dot{\gamma_t}\rangle},
\end{equation}
where $\dot{\gamma}_t$ is  velocity. 
From~\cref{eq:geodesic}, we see that geodesics tend towards regions where $\|\Metric(x)\|$ is small. In Euclidean geometry (i.e., $\mathcal{M} = \R^d$ and $\Metric(x) = \mathbf{I}$), $\gamma_t^*$ is a straight line with constant speed.



\section{Metric Flow Matching}\label{sec:metric_flow_matching}
\looseness=-1
We introduce \model (MFM), a new simulation-free framework that generalizes CFM by constructing probability paths supported
on the data manifold.
MFM 
learns interpolants $x_t$ in \cref{eq:CFM_flow} whose velocity minimizes a data-dependent Riemannian metric assigning a lower cost to regions with high data concentration. Consequently, the CFM objective is evaluated in areas of low data uncertainty, and the corresponding vector field, $v_{\theta}$ in \cref{eq:CFM_flow}, learns to pass through these regions.

\looseness=-1
We structure this section as follows. In~\S\ref{sec:metriclearn} we discuss the trajectory inference problem and how straight interpolants $x_t$ in CFM result in undesirable probability paths whose support is not defined on the data manifold. We remedy this problem by choosing to represent the data manifold via a Riemannian metric in $\R^d$, such that geodesics avoid straying away from the samples in the training set. 
In~\S\ref{sec:MFM} we introduce MFM  
and compare it with Riemannian Flow Matching  \citep{chen2023riemannian}.

\cut{
We are interested in matching a source distribution $p_0$ to a target distribution $p_1$ using a finite dataset of samples $ \gD = \{ x_i\}^N_{i=1}$. 
Differently from the simple generative scenario where $p_0$ is uninformative noise we push to data, we study the case where both $p_0$ and $p_1$ are informative. Operationally, this means that both $p_0$ and $p_1$ are empirical distributions sampled from some unknown underlying dynamics that we seek to model. Two main application domains are purpose-built for such a setting: 

\begin{enumerate}[label=\textbf{(Task \arabic*)},left=0pt,nosep]
\item \xhdr{Trajectory inference} Reconstructing system dynamics from cross-sectional time samples.
\item
\xhdr{Latent interpolation} Unpaired translation in some meaningful latent or feature space.
\end{enumerate}
\looseness=-1
In both applications, {\em how} we interpolate between $p_0$ and $p_1$ 
is a critical design consideration as trajectories from $x_0\sim p_0$ to $x_1\sim p_1$ determine if the matching captures the underlying dynamics in (\textbf{Task 1}) or leads to semantically meaningful translations in observation space after decoding in (\textbf{Task 2}).

\looseness=-1
To match source and target distributions, we leverage the CFM framework, which constructs interpolants $x_t = \psi_t(x_0 | x_0, x_1)$ using the flow. In general, $x_t$ dictates where in space the target vector field $\dot{x}_t$ is built and matched, impacting the learning signal for  $v_\theta$ in \cref{eq:CFM_flow}. However, the choice of $x_t$---and thus the probability path---without any further consideration is agnostic of the support of the data and solely depends on the endpoints $x_0$ and $x_1$. As a result, trajectories are usually {\em straight}, 
leading to undesirable interpolations that stray away from the support of the data points---giving rise to unnatural matching (\S\ref{sec:metriclearn}) from the perspectives of the two tasks above. 
\looseness=-1
We next introduce \model (MFM), a new framework that generalizes CFM by learning interpolants $x_t$ in \cref{eq:CFM_flow} that benefit from the task or data-dependent inductive biases, obtaining meaningful matching between distributions.
In light of the ``manifold assumption" \citep{tenenbaum2000global}, which states that samples are distributed on a curved manifold in a higher-dimensional linear space, we bias conditional paths to stay supported on the data manifold to avoid unnatural interpolations. Intuitively, we attain this by learning interpolants whose velocity minimizes a data-dependent Riemannian metric, which assigns lower cost to regions of high data concentration; encouraging interpolants to curve towards the data. Consequently, the CFM objective is evaluated in areas of \emph{low data uncertainty}, and the corresponding vector field, $v_{\theta}$ in \cref{eq:CFM_flow}, learns to pass through these regions.}

\subsection{Metric learning}\label{sec:metriclearn}
\looseness=-1
Assume that $p_0$ and $p_1$ are empirical distributions 
and that we have access to a dataset of samples $ \gD = \{ x_i\}^N_{i=1}$---in practice $\gD$ is constructed concatenating samples from both the source and target distributions. We are interested in the problem of {\bf trajectory inference} \citep{lavenant2021towards}, where we need to reconstruct an unknown dynamics $t\mapsto p^*_t$, with observed time marginals $p_0^* = p_0$ and $p_1^* = p_1$---the extension to multiple timepoints is easy. 
In many realistic settings, including single-cell RNA sequencing \citep{macosko2015highly}, time measurements are sparse, and leveraging biases from the data is hence key to achieving a faithful reconstruction. For this reason, 
we invoke the ``manifold hypothesis'' \citep{ bengio2013representation}, where the data arises from a low-dimensional manifold $\gM\subset \R^d$---a property satisfied by cells embedded in the space of gene expressions~\citep{moon2018manifold}: 
\begin{equation}\label{eq:manifold_support}
{\rm supp}(p^*_t):=\{x\in\R^d: p^*_t(x) > 0\}\subset \gM, \quad t\geq 0.
\end{equation} 
Since any regular time dynamics can be described using the continuity equation generated by some vector field $v_t^*$ \citep[Theorem 8.3.1]{ambrosio2005gradient}, \citep{ neklyudov2023action}, we rely on CFM, and approximate $p_t^*$ via the probability path $p_t$ associated with $v_{t,\theta}$ in \cref{eq:CFM_flow}. From \cref{eq:manifold_support}, it follows that a valid reconstruction entails 
${\rm supp}(p_t)\subset \gM$. 
As the support of $p_t$ is determined by the interpolants $x_t$ in \cref{eq:CFM_flow}, we need $x_t$ to be constrained to stay on $\gM$. 
However, in the classical CFM setup, this condition is violated since interpolants are often {\em straight} lines, {\em agnostic} of the data's support: $x_t = tx_1 + (1-t)x_0$ \citep{tong2023improving, shaul2023kinetic}, with $x_0, x_1$ sampled from the marginals. In this case, if $p_t(x|x_0,x_1) \approx \delta_{x_t}(x)$, then the support of $p_t$ satisfies
\begin{equation*}
    {\rm supp}(p_t) \subset \{y\in \R^d: \exists \,\,(x_0,x_1)\sim q:\,\, y = tx_1 + (1-t)x_0\}. 
\end{equation*}
\looseness=-1
However, the dynamics $t\mapsto p_t^*$ is often {\em nonlinear}, as for single-cells~\citep{moon2018manifold}, meaning that ${\rm supp}(p_t) \not\subseteq \gM$. Straight interpolants are hence too restrictive and should instead be supported on $\gM$ so to
replicate \emph{actual} trajectories from $x_0$ to $x_1$, which are generated by the underlying process. 

\looseness=-1
Operating on a lower-dimensional $\gM$ is challenging though, since it requires different coordinate systems \citep{schonsheck2019chart, salmona2022can} 
and may incur overfitting \citep{loaiza2022diagnosing, loaiza2024deep}. Nonetheless, a key property posited by the ``manifold hypothesis'' is that $\gM$ concentrates around the data points $\gD$ \citep{arvanitidis2022pulling, chadebec2022geometric}.
As such, interpolants $x_t$ should remain close to $\gD$. 
Therefore, instead of changing the dimension, we design $x_t$ to minimize a cost in $\R^d$ that is lower on regions close to 
$\gD$. 
We achieve this following the ``metric learning'' approach, \citep{hauberg2012geometric} where we equip $\R^d$ with a suitable Riemannian metric $\metric$. 
\begin{defn}\label{def:data_dependent_metric}
A data-dependent metric $\metric$ on $\R^d$ is a smooth map $\metric:\R^d\rightarrow {\rm SPD}(d)$ parameterized by the dataset $\gD=\{x_i\}_{i=1}^N\subset\R^d$, i.e. $\metric(x) = \Metric(x;\gD)\in{\rm SPD}(d), \forall x\in \R^d$. 
\end{defn}
\looseness=-1
We describe a specific metric $\metric$ in~\S\ref{sec:metric}, and note that $\metric$ can also enforce constraints from the task (\S\ref{sec:conclusions}). 
Naturally, if $\Metric(x;\gD) = \mathbf{I}$, we recover the Euclidean metric. We show that we can always construct $\metric$ so that the geodesics $\gamma_t^*$ stay close to the data $\gD$. For details, we refer to \Cref{prop:geodesics} in Appendix~\S\ref{app:sec_3_proofs}. 
\begin{prop}[name=Informal]\label{prop:informal} Given a dataset $\gD\subset \R^d$, let $\metric$ be any metric such that: (i) The eigenvalues of $\Metric(x;\gD)$ do not approach zero when $x$ is distant from $\gD$; (ii)  $\|\Metric(x;\gD)\|$ is sufficiently small if $x$ is close to $\gD$. Then for each $(x_0,x_1)\sim q$, the geodesic $\gamma_t^*$ connecting $x_0$ to $x_1$ stays close to $\gD$.
\end{prop}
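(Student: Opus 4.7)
The strategy is to compare the $\metric$-length of the geodesic $\gamma_t^*$ with that of a carefully chosen reference curve $\alpha_t$ lying near $\gD$. Condition (ii) will make $\alpha_t$ cheap to traverse, while condition (i) will impose a per-unit-Euclidean-length cost on any excursion far from $\gD$. Minimality of $\gamma_t^*$ in the sense of \eqref{eq:geodesic} then forces it to avoid such excursions.

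\textbf{Step 1 (quantitative reformulation).} First I would make (i) and (ii) quantitative. From (i), pick $R_0>0$ and $\lambda>0$ such that $\Metric(x;\gD)\succeq\lambda\mathbf{I}$ whenever $\mathrm{dist}(x,\gD)\ge R_0$. From (ii), let $M>0$ be a uniform upper bound on $\|\Metric(x;\gD)\|$ on the tube $U_{R_0}:=\{x\in\R^d:\mathrm{dist}(x,\gD)\le R_0\}$; the ``sufficiently small'' clause in (ii) means that $M$ is at our disposal.

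\textbf{Step 2 (reference path).} Since the marginals of $q$ are empirical on $\gD$, both $x_0,x_1\in\gD$. I would construct a reference curve $\alpha:[0,1]\to U_{R_0}$ with $\alpha_0=x_0$, $\alpha_1=x_1$ by concatenating short segments hopping between nearby data points within $\gD$ (this implicitly requires $\gD$ to be $R_0$-connected, which should be part of the formal hypothesis in the appendix). Using $\Metric\preceq M\mathbf{I}$ on $U_{R_0}$, its $\metric$-length obeys
\[
L_\metric[\alpha]\;=\;\int_0^1\sqrt{\langle \dot\alpha_t,\Metric(\alpha_t)\dot\alpha_t\rangle}\,dt\;\le\;\sqrt{M}\,L_{\mathrm{Euc}}[\alpha],
\]
where $L_{\mathrm{Euc}}[\alpha]$ depends only on the Euclidean layout of $\gD$ and can be controlled by its diameter (times a covering constant).

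\textbf{Step 3 (excursion lower bound).} Next, suppose for contradiction that $\gamma_t^*$ leaves the larger tube $U_R$ for some $R>R_0$. Let $[s_1,s_2]\subset[0,1]$ be a maximal subinterval on which $\gamma_t^*\notin U_R$. Since $x_0,x_1\in\gD\subset U_{R_0}$, this excursion must enter and exit $U_R$, so its Euclidean length is at least $2(R-R_0)$. On this subinterval the integrand in \eqref{eq:geodesic} is bounded below by $\sqrt{\lambda}\,\|\dot\gamma_t^*\|_2$, giving
\[
L_\metric[\gamma^*]\;\ge\;L_\metric\bigl[\gamma^*|_{[s_1,s_2]}\bigr]\;\ge\;2\sqrt{\lambda}\,(R-R_0).
\]

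\textbf{Step 4 (combine and conclude).} By minimality, $L_\metric[\gamma^*]\le L_\metric[\alpha]$, so
\[
2\sqrt{\lambda}\,(R-R_0)\;\le\;\sqrt{M}\,L_{\mathrm{Euc}}[\alpha],\qquad\text{i.e.}\qquad R\;\le\;R_0+\frac{\sqrt{M}}{2\sqrt{\lambda}}\,L_{\mathrm{Euc}}[\alpha].
\]
Since condition (ii) lets us drive $M$ arbitrarily small, $\gamma_t^*$ is confined to an arbitrarily thin tube around $\gD$, which is the desired statement.

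\textbf{Main obstacle.} The delicate part is Step 2: producing a reference path that actually stays inside $U_{R_0}$ while connecting arbitrary endpoints in $\gD$. This requires $\gD$ to be sufficiently connected and dense (as a proxy for $\gM$ being path-connected), so the formal version of the claim must include some covering or connectedness hypothesis and must control $L_{\mathrm{Euc}}[\alpha]$ uniformly in $(x_0,x_1)$. Making this quantitative---and verifying that the bound does not degenerate as one tightens $R_0$ or $M$---is the technical heart of the argument and is almost certainly why the authors only give an informal version in the main text.
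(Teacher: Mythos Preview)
Your approach is essentially the paper's: upper-bound the $\metric$-length of a reference path near $\gD$ via (ii), lower-bound the $\metric$-length of any excursion far from $\gD$ via (i), and conclude by minimality; the paper's formal statement indeed \emph{assumes} the existence of a short reference path in a $\delta$-tube of $\gD$ (your Step~2 obstacle) rather than constructing it. One small slip in Step~3: the maximal subinterval on which $\gamma_t^*\notin U_R$ can have arbitrarily small Euclidean length (the geodesic might just graze $\partial U_R$), so your bound $2(R-R_0)$ is not justified as written. The fix is to take instead the maximal subinterval on which $\gamma_t^*\notin U_{R_0}$ that contains a point outside $U_R$; on this interval the eigenvalue lower bound $\lambda$ still applies, and the triangle inequality gives Euclidean length at least $2(R-R_0)$---this is precisely how the paper argues (using a single half of the excursion, from $\partial B_\rho$ to a point outside $B_{2\rho}$, to get length $\ge\rho$).
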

\looseness=-1
Given $\metric$ as in the statement, if $x_t = \gamma_t^*$ and $p_t(x|x_0,x_1)\approx \delta_{x_t}(x)$, then the probability path $p_t$ generated by $v_{\theta}$ in \cref{eq:CFM_flow} has support near 
$\gD$, i.e. ${\rm supp}(p_t)$ lies close to the data manifold $\gM$, which is our goal. Unfortunately, for all but the most trivial metrics $\metric$ on $\R^d$, it is not possible to obtain closed-form expressions for the geodesics $\gamma_t^*$. As such, finding the geodesic $\gamma_t^*$ necessitates the expensive {\em simulation} of second-order nonlinear Euler-Lagrange equations \citep{hennig2014probabilistic}. 

\cut{
\subsection{Metric Learning}\label{sec:metriclearn}
\looseness=-1
We are interested in matching informative distributions $p_0$ and $p_1$, assuming that both $p_0$ and $p_1$ are empirical distributions given by the underlying problem 
and we have access to a finite dataset of samples $ \gD = \{ x_i\}^N_{i=1}$. As stated in \S\ref{sec:intro}, this setting covers trajectory inference (\textbf{Task 1}) and unpaired translation (\textbf{Task 2}). In these applications, {\em how} we interpolate through a probability path $p_t$ is a critical design consideration, and it is not sufficient to hit the marginal $p_1$. In fact, in the CFM framework, one can construct interpolants $x_t$ such that $p_t$ preserves the marginals $p_0$ and $p_1$ but becomes pure noise at $t=1/2$ \citep{albergo2023stochastic}, which means failing to either model the physical process generating the populations ({\bf Task 1)} or preserve any semantically meaningful feature ({\bf Task 2}).

\looseness=-1
Nonetheless, in CFM interpolants $x_t$ \eqref{eq:CFM_flow} are often {\em straight} lines: $x_t = tx_1 + (1-t)x_0$ \citep{tong2023improving, shaul2023kinetic}, {\em agnostic} of the data' support. In this case, if $p_t(x|x_0,x_1) \approx \delta_{x_t}(x)$, then the marginal $p_t$ is constrained to be the density describing the linear stochastic interpolant $x_t = (1-t)x_0 + tx_1$, where $x_0,x_1$ follow the joint distribution $q$, 
which can limit the expressive power of CFM, leading to undesirable matching. For example, if $q(x_0,x_1) = p_0(x_0)p_1(x_1)$, i.e. we have independent coupling, then $p_{t=1/2}(x) = 2(p_0 * p_1)(2x)$, which may fail to correctly reconstruct the dynamics at $t=1/2$ ({\bf Task 1}) or preserve relevant input features ({\bf Task 2}). Therefore, a key question is: {\em How do we choose interpolants $x_t$ that result in a more meaningful matching between $p_0$ and $p_1$?} 

\looseness=-1
\xhdr{Data-dependent Riemannian metrics} We address this question by relying on the ``manifold assumption'' \citep{tenenbaum2000global, belkin2003laplacian} positing that data samples arise from a low-dimensional manifold $\gM\subset \R^d$. Accordingly, interpolants $x_t$ should also be constrained to stay on $\gM$ to preserve the nonlinear geometry of the data or task. However, operating on $\gM$ can be challenging since it requires different nonlinear coordinate systems. As such, we adopt the ``metric learning'' approach, \citep{xing2002distance, hauberg2012geometric}, where we simply equip the ambient space $\R^d$ with a Riemannian metric $\metric$ depending on the dataset $\gD$, which assigns lower cost to regions with higher data concentration. Below, ${\rm SPD}(d)$ is the space of symmetric, positive definite $d\times d$ matrices. 
\begin{definition}\label{def:data_dependent_metric}
A data-dependent metric $\metric$ on $\R^d$ is a smooth map $\metric:\R^d\rightarrow {\rm SPD}(d)$ parameterized by the dataset $\gD=\{x_i\}_{i=1}^N\subset\R^d$, i.e. $\metric(x) = \Metric(x;\gD)\in{\rm SPD}(d), \forall x\in \R^d$. We say that $\metric$ is feasible, if $\|\Metric(x;\gD)\|/\|\Metric(x_i;\gD)\|\rightarrow \infty$ as $\|x - x_i\| \rightarrow \infty$ for all $1\leq i\leq N$.  
\end{definition}
\looseness=-1
We provide an instance of a feasible metric $\metric$ in~\S\ref{sec:metric}, and note that $\metric$ can also enforce any specific constraint from the data or task (\S\ref{sec:conclusions}). Naturally, if $\Metric(x;\gD) = \mathbf{I}$, we recover the Euclidean setting. According to \Cref{def:data_dependent_metric}, given a feasible metric $\metric$ on $\R^d$, if we sample $(x_0, x_1)
\sim q$, which are hence close to the support of $\gD$, the geodesic $\gamma_t^*$ of $\metric$ connecting $x_0$ to $x_1$ will avoid straying away from the data due to the cost of the metric being too large \eqref{eq:geodesic}. Therefore, if we set the interpolants $x_t = \gamma_t^*$, then we would have probability paths distributed over the data support, leading to more desirable matching. Unfortunately, for all but the most trivial metrics $\metric$ on $\R^d$, it is not possible to obtain closed-form expressions for the geodesics. Accordingly, to find $\gamma^*_t$, one needs to {\em simulate} a system of second-order nonlinear Euler Lagrange equations for any pair $(x_0,x_1)$ \citep{hennig2014probabilistic}, which in high-dimensional scenarios is too costly. }

\subsection{Parameterization and optimization of interpolants}\label{sec:MFM}
\looseness=-1
Consider a metric $\metric$ on $\R^d$ as in \Cref{def:data_dependent_metric} whose geodesics $\gamma_t^*$ lie close to $\gD$ as per \Cref{prop:informal}. We propose a {\em simulation-free}  approximation to paths $\gamma^\ast_t$ by introducing interpolants of the form 
\begin{equation}\label{eq:trajectory_means}
    x_{t,\eta} = (1-t)x_0 + tx_1 + t(1-t)\varphi_{t,\eta}(x_0,x_1), 
\end{equation}
where $\eta$ are the parameters of a neural network $\varphi_{t,\eta}$ acting as a nonlinear ``correction'' for straight interpolants. Note that the boundary conditions are met, i.e. that the path $x_{t,\eta}$ recovers both $x_0$ and $x_1$ at times $t=0$ and $t=1$, respectively. In fact, $x_{t,\eta}$ reduces to the convex combination between $x_0$ and $x_1$ if $\varphi_{t,\eta} = 0$, meaning that $x_{t,\eta}$ strictly generalize the straight paths used in~\citep{lipman_flow_2022,liu2022flow}.  
Towards the goal of learning $\eta$ so that $x_{t,\eta}$ approximates the geodesic $\gamma^\ast_t$, we note that $\gamma^\ast_t$ can be characterized as the path minimizing the convex functional $\gE_\metric$ below: 
\begin{equation}\label{eq:energy_functional}
    \gamma^*_t = \argmin_{\gamma_t:\, \gamma_0 = x_0, \, \gamma_1 = x_1}\, \gE_\metric(\gamma_t), \quad \gE_\metric(\gamma_t):= \mathbb{E}_{t}\left[\dot{\gamma}_t^\top \Metric(\gamma_t;
    \gD)\dot{\gamma}_t\right].
\end{equation}
\looseness=-1
Since $\gamma^\ast_t$ minimizes $\gE_\metric$ over all paths connecting $x_0$ to $x_1$, and  
$x_{t,\eta}$ in \cref{eq:trajectory_means} satisfies these boundary conditions, we can estimate $\eta$ by simply minimizing the convex functional $\gE_\metric$ over $x_{t,\eta}$, which leads to the following geodesic objective (the training procedure is reported in~\Cref{alg:mfm_geo}): 
\begin{mdframed}[style=MyFrameEq]
\begin{equation}\label{eq:geometric_loss}
    \mathcal{L}_\metric(\eta) := \mathbb{E}_{(x_0,x_1)\sim q}\,[\gE_\metric(x_{t,\eta})] = \mathbb{E}_{(x_0,x_1)\sim q,t}  \, \left[(\dot{x}_{t,\eta})^\top\Metric(x_{t,\eta};\gD)\dot{x}_{t,\eta}\right]. 
\end{equation}
\end{mdframed}
\looseness=-1
Given interpolants that approximate $\gamma_t^*$ and hence stay close to the data manifold, we can then rely on the CFM objective in \cref{eq:CFM_flow} to regress the vector field $v_\theta$. Since $\metric$ makes the ambient space $\R^d$ into a Riemannian manifold $(\R^d, \metric)$, we need to 
replace the norm $\| \cdot \|$ in \cref{eq:CFM_flow} with the one $\| \cdot \|_\metric$ induced by the metric, and rescale the marginals $p_0,p_1$ using the volume form induced by $\metric$, so to extend $p_0,p_1$ to densities in $\mathbb{P}(\R^d,\metric)$ (see Appendix \S\ref{app:primer}). Similar arguments work for the joint distribution $q$. 
We can finally introduce our framework \model that generalizes Conditional Flow Matching \eqref{eq:CFM_flow} to leverage {\em any} data-dependent metric $\metric$, by using interpolants $x_{t,\eta}$ \eqref{eq:trajectory_means}, whose parameters $\eta$ are obtained from minimizing the geodesic cost $\gE_\metric$. The MFM objective can be 
stated as: 
\begin{mdframed}[style=MyFrameEq]
\begin{equation}\label{eq:loss_MFM}
    \mathcal{L}_{\rm MFM}(\theta) = \mathbb{E}_{t,(x_0,x_1)\sim q}\left[ \left\|v_{t,\theta}(x_{t,\eta^*}) - \dot{x}_{t,\eta^*} \right\|_{\metric(x_{t,\eta^*})}^2\right], \quad \eta^* = \argmin_{\eta} \mathcal{L}_{\metric}(\eta).
\end{equation}
\end{mdframed}
\looseness=-1
A description of \model is given in~\Cref{alg:mfm_vf}.
As the interpolants $x_{t,\eta}$ approximate geodesics of $\metric$, in MFM the vector field $v_{t,\theta}$ is regressed on the data manifold $\gM$, where the underlying dynamics $p_t^*$ is supported, resulting in better reconstructions. 
Crucially, \cref{eq:geometric_loss} only depends on time derivatives of $x_{t,\eta}$. 
Therefore, MFM avoids simulations and simply requires training an additional (smaller) network $\varphi_{t,\eta}$ in \cref{eq:trajectory_means}, which can be done {\em prior} to training $v_{t,\theta}$.

\begin{algorithm}[t]
\caption{Pseudocode for training of geodesic interpolants}
\begin{algorithmic}[1]
\Require{coupling $q$, initialized network $\varphi_{t,\eta}$, data-dependent metric $\Metric(\cdot; \gD)$}
\While{Training}
    \State Sample $(x_0, x_1) \sim q$ and $t \sim \mathcal{U}(0,1)$
    \State $ x_{t,\eta} \gets (1-t)x_0 + tx_1 + t(1-t)\varphi_{t,\eta}(x_0,x_1)$ \Comment{\cref{eq:trajectory_means}}
    \State $\dot{x}_{t,\eta} \gets x_1 - x_0 + t(1-t)\dot{\varphi}_{t,\eta}(x_0, x_1) + (1-2t)\varphi_{t,\eta}(x_0,x_1)$
    \State $\ell(\eta) \gets (\dot{x}_{t,\eta})^\top\Metric(x_{t,\eta};\gD)\dot{x}_{t,\eta} $ \Comment{Estimate of objective $\mathcal{L}_\metric(\eta)$ from \cref{eq:geometric_loss}}
    \State Update $\eta$ using gradient $\nabla_{\eta} \ell(\eta)$
\EndWhile
\Return (approximate) geodesic interpolants parametrized by $\varphi_{t,\eta}$
\end{algorithmic}
\label{alg:mfm_geo}
\end{algorithm}


\looseness=-1
\xhdr{MFM versus Riemannian Flow Matching} While CFM has already been extended to Riemannian manifolds in the Riemannian Flow Matching (RFM) framework of \citet{chen2023riemannian}, MFM crucially differs from RFM in two ways. To begin with, MFM relies on the data or task inducing a Riemannian metric on the ambient space which is then accounted for in the matching objective. This is in sharp contrast to RFM, which instead assumes that the metric of the ambient space is {\em given} and is {\em independent} of the data points. Secondly, RFM does not incorporate conditional paths that are learned. In fact, in the scenario above where $\metric$ is a metric whose geodesics $\gamma_t^*$ stay close to the data  support, adopting RFM would entail replacing the MFM objective $\gL_{\rm MFM}$ in \eqref{eq:loss_MFM} with
\begin{equation}\label{eq:RFM}
\mathcal{L}_{\rm RFM}(\theta) = \mathbb{E}_{t,(x_0,x_1)\sim q}\left\|v_{t,\theta}(\gamma^*_t) - \dot{\gamma}^*_t \right\|^2_{\metric(\gamma_t^*)}.
\end{equation}
\looseness=-1
However, as argued above, for almost any metric $\metric$ on $\R^d$, geodesics 
$\gamma^*_t$ can only be found via {\em simulations}, which 
in high dimensions 
inhibits the easy application of RFM. 
Conversely, MFM 
designs interpolants that minimize a geodesic cost \eqref{eq:geometric_loss} and hence approximate $\gamma_t^*$ without incurring simulations. 

\section{Learning Riemannian Metrics in Ambient Space}\label{sec:metric}
\looseness=-1
In this section, we focus on a concrete choice of $\metric$, which can easily be used within MFM (\S\ref{sec:rbf}). We also introduce OT-MFM, a variant of MFM that leverages Optimal Transport to find a coupling $q$ between $p_0$ and $p_1$. Next, in~\S\ref{sec:comparisons} we discuss how MFM  generalizes recent works that find interpolants that minimize energies by accounting for the Riemannian geometry induced by the data.

\subsection{A family of diagonal metrics: LAND and RBF}\label{sec:rbf}
\looseness=-1
We consider a family of metrics $\LAND$ as in \Cref{def:data_dependent_metric}, independent of specifics of the data type or task. For the ease of exposition, we omit to write the explicit dependence on the dataset $\gD = \{x_i\}_{i=1}^N$.
Given $\varepsilon > 0$, we let $x\mapsto \LAND(x) \equiv \Metric_{\varepsilon}(x) = ({\rm diag}(\mathbf{h}(x)) +\varepsilon\mathbf{I})^{-1}$ be the ``LAND'' metric, where 
\begin{equation}\label{eq:metric_formulation_LAND}
    h_\alpha(x) =  \sum_{i=1}^{N}(x_i^\alpha - x^\alpha)^2\exp\Big(-\frac{\| x - x_i\|^2}{2\sigma^2}\Big), \quad 1\leq\alpha\leq d,
\end{equation}

\looseness=-1
with $\sigma$ the kernel size. We emphasize that $\Metric_{\varepsilon}(x)$ was introduced by~\citet{arvanitidis2016locally}---from which we borrow the name---but its algorithmic use in MFM is fundamentally different. 
In line with \Cref{prop:informal}, we see that $\|\Metric_{\varepsilon}(x)\|$ is larger away from $\gD$, thus pushing geodesics \eqref{eq:geodesic} to stay close to the data support, as desired. While $\LAND$ is flexible and directly accounts for all the samples in $\gD$, in high-dimension selecting $\sigma$ in \cref{eq:metric_formulation_LAND} can be challenging. For these reasons, we follow \citet{arvanitidis2021geometrically} and introduce a variation of $\LAND$ of the form $\Metric_{\rm RBF}(x) = (\mathrm{diag}(\tilde{\mathbf{h}}(x)) + \varepsilon\mathbf{I})^{-1}$, where
\begin{equation}\label{eq:metric_formulation}
    \tilde{h}_\alpha(x) =  \sum_{k=1}^{K}\omega_{\alpha,k}(x)\exp\Big(-\frac{\lambda_{\alpha,k}}{2}\| x - \hat{x}_k\|^2 \Big), \quad 1\leq\alpha\leq d,
\end{equation}
with $K$ the number of clusters with centers $\hat{x}_k$ and $\lambda_{\alpha,k}$ the bandwidth of cluster $k$ for channel $\alpha$ (see Appendix~\S\ref{app:sec_metrics} for details). 
In particular, $h_\alpha$ is realized via a Radial Basis Function (RBF) network \citep{que2016back}, where $\omega_{\alpha,k} > 0$ are {\em learned} to enforce the behavior $h_\alpha(x_i) \approx 1$ for each data point $x_i$ so that the resulting metric $\rbf$ assigns lower cost to regions close to the centers $\hat{x}_k$. In our experiments, we then rely on $\LAND$ in low dimensions, and instead use $\rbf$ in high dimensions. We also note that all metrics considered are diagonal, which makes MFM more efficient. Explicitly, given the interpolants in~\cref{eq:trajectory_means}, the geometric loss in \cref{eq:geometric_loss} with respect to $\rbf$ can be written as:
\begin{mdframed}[style=MyFrameEq] 
\begin{align}\label{eq:rbf_metric_loss}
    \mathcal{L}_{\rbf}(\eta) &= \mathbb{E}_{(x_0,x_1)\sim q} [\,\gE_{\rbf}(x_{t,\eta})] = \mathbb{E}_{t,(x_0,x_1)\sim q}\left[\, \sum_{\alpha = 1}^{d} \frac{(\dot{x}_{t,\eta})_\alpha^2}{\tilde{h}_\alpha(x_{t,\eta}) + \varepsilon}\right].
\end{align}
\end{mdframed}
\looseness=-1
We see that the loss acts as a geometric regularization, with the velocity $\dot{x}_{t,\eta}$ penalized more in regions away from 
the support of the dataset $\gD$, i.e. when $\lambda_{\alpha,k}\|x_{t,\eta} - \hat{x}_k\|$ is large for all centers $\hat{x}_k$ in \cref{eq:metric_formulation}.

\looseness=-1
\xhdr{OT-MFM} In \cref{eq:loss_MFM}, samples $x_0,x_1$ follow a joint distribution $q$, with marginals $p_0$ and $p_1$. Since we are interested in the problem of trajectory inference, with emphasis on single-cell applications where the principle of least action holds \citep{schiebinger2021reconstructing}, we focus on a coupling $q$ that minimizes the distance in probability space between the source and target distributions. Namely, we consider the case where $q$ is the 2-Wasserstein optimal transport plan $\pi^*$ from $p_0$ to $p_1$ \citep{villani2009optimal}:
\begin{equation}\label{eq:optimal_transport}
    \pi^* = \argmin_{\pi\in\Pi}\,\int_{\R^d\times\R^d} c^2(x,y)d\pi(x,y),
\end{equation}
\looseness=-1
where $\Pi$ are the probability measures on $\R^d\times \R^d$ with marginals $p_0$ and $p_1$ and $c$ is any cost. 
While we could 
choose $c$ based on $\rbf$, we instead select $c$ to be the $L_2$ distance in $\R^d$ to avoid additional computations and so we can study the role played by $x_{t,\eta}$ even when $q=\pi^*$ is agnostic of the data manifold. 
We then propose the OT-MFM objective, where $\eta^*$ minimizes the geodesic loss in \cref{eq:rbf_metric_loss}:
\begin{mdframed}[style=MyFrameEq]
\begin{equation}\label{eq:OT-MFM}
\mathcal{L}_{\OTRBF}(\theta) = \mathbb{E}_{t,(x_0,x_1)\sim \pi^*}\left[\left\|v_{t,\theta}(x_{t,\eta^*}) - \dot{x}_{t,\eta^*} \right\|_{\rbf(x_{t,\eta^*})}^2\right].
\end{equation}
\end{mdframed}
We note that the case of $\LAND$ is dealt with similarly. Additionally, different choices of the joint distribution $q$, beyond Optimal Transport, can be adapted from CFM \citep{tong2023improving} to MFM in \cref{eq:loss_MFM}.

\subsection{Understanding MFM through energies}\label{sec:comparisons}

\looseness=-1
In MFM 
we learn interpolants $x_t$ that are {\em optimal} according to~\cref{eq:geometric_loss}. Previous works have studied the ``optimality'' of interpolants, but have ignored the data manifold.  \citet{shaul2023kinetic} proposed to learn interpolants $x_t$ that minimize the {\bf kinetic energy}  $\gK(\dot{x}_t) = \mathbb{E}_{t,(x_0,x_1)\sim q}\,\|\dot{x}_t\|^2$. 
However, $\gK$ assigns each point in space the same cost, {\em independent of the data}, and leads to straight interpolants that may stray away from $\gD$. 
Conversely, our objective in \cref{eq:geometric_loss} 
can equivalently be written  as 
\begin{equation*}
     \mathcal{L}_{\metric}(\eta) = \mathbb{E}_{(x_0,x_1)\sim q}\left[\,\gE_\metric(x_{t,\eta})\right] \equiv \mathbb{E}_{t,(x_0,x_1)\sim q}  [\,\| \dot{x}_{t,\eta}\|^2_{\metric(x_{t,\eta})} ]. 
\end{equation*}
\looseness=-1
As a result, the geodesic loss $\gL_{\metric}(\eta) \equiv \gK_{\metric}(\dot{x}_{t,\eta})$ is {\em precisely} the kinetic energy of vector fields with respect to $\metric$ and hence
accounts for the cost induced by the data. In fact, choosing $\Metric(x)=\mathbf{I}$, 
recovers $\gK$.  
We note that the parameterization $x_{t,\eta}$ in \cref{eq:trajectory_means} is more expressive than $x_t = a(t)x_1 + b(t)x_0$, which is studied in \citet{albergo2022building, shaul2023kinetic}. 
Besides, $x_{t,\eta}$ not only depends on the endpoints $x_0,x_1$ but, implicitly, on all the data points $\gD$ through metrics such as $\rbf$.

\looseness=-1
\xhdr{Data-dependent potentials} Energies more general than the kinetic one $\gK$ have been considered in~\citet{neklyudov2023action,liu2023generalized,neklyudov2023computational}. 
In particular, adapting GSBM~\citep{liu2023generalized} 
from the stochastic Schr\"odinger bridge setting to CFM, entails designing interpolants $x_t$ that minimize $\gU(x_t,\dot{x}_t) = \gK(\dot{x}_t) + V_t(x_t)$, with $V_t$ a potential enforcing additional constraints. However, GSBM does not prescribe a general recipe for $V_t$ and instead leaves to the modeler the task of constructing $V_t$, based on applications. Conversely, MFM relies on a Riemannian approach to propose an {\em explicit} objective, i.e. \cref{eq:rbf_metric_loss}, that holds irrespective of the task and can be learned {\em prior} to regressing the vector field $v_{t,\theta}$. In fact, \cref{eq:rbf_metric_loss} can be rewritten as
\begin{align}\label{eq:potential_reformulation}
    \gL_{\rbf}(\eta) = \mathbb{E}_{t,(x_0,x_1)\sim q}\left[\|\dot{x}_{t,\eta}\|^2 + V_{t,\eta}(x_{t,\eta}, x_0,x_1)\right].
\end{align}
\looseness=-1
where $V_{t,\eta}$ is parametric function depending on the boundary points $x_0,x_1$ (see Appendix~\S\ref{app:sec:potentials} for an expression). In contrast to GSBM, MFM designs interpolants $x_{t,\eta}$ that jointly minimize $\gK$ and a potential $V_{t,\eta}$ that is not fixed but also updated with the same parameters $\eta$ to bend paths towards the data. 
 
\section{Experiments}\label{sec:exp}

We test \model on different tasks: artificial dynamic reconstruction and navigation through LiDAR surfaces~\S\ref{sec:synth_lidar}; unpaired translation between classes in images~\S\ref{sec:unpaired}; reconstruction of cell dynamics. 
Further results and experimental details can be found in Appendices~\S\ref{app:sec_exp_details},~\S\ref{app:singlecell}~and~\S\ref{app:supp_unpaired_translation}.

\begin{wrapfigure}{r}{0.5\textwidth} 
    \centering
    \begin{minipage}[t]{0.24\textwidth}
        \centering
        {OT-CFM}\vspace{0.08\textwidth}
        \includegraphics[width=\textwidth, trim=50 50 50 50, clip]{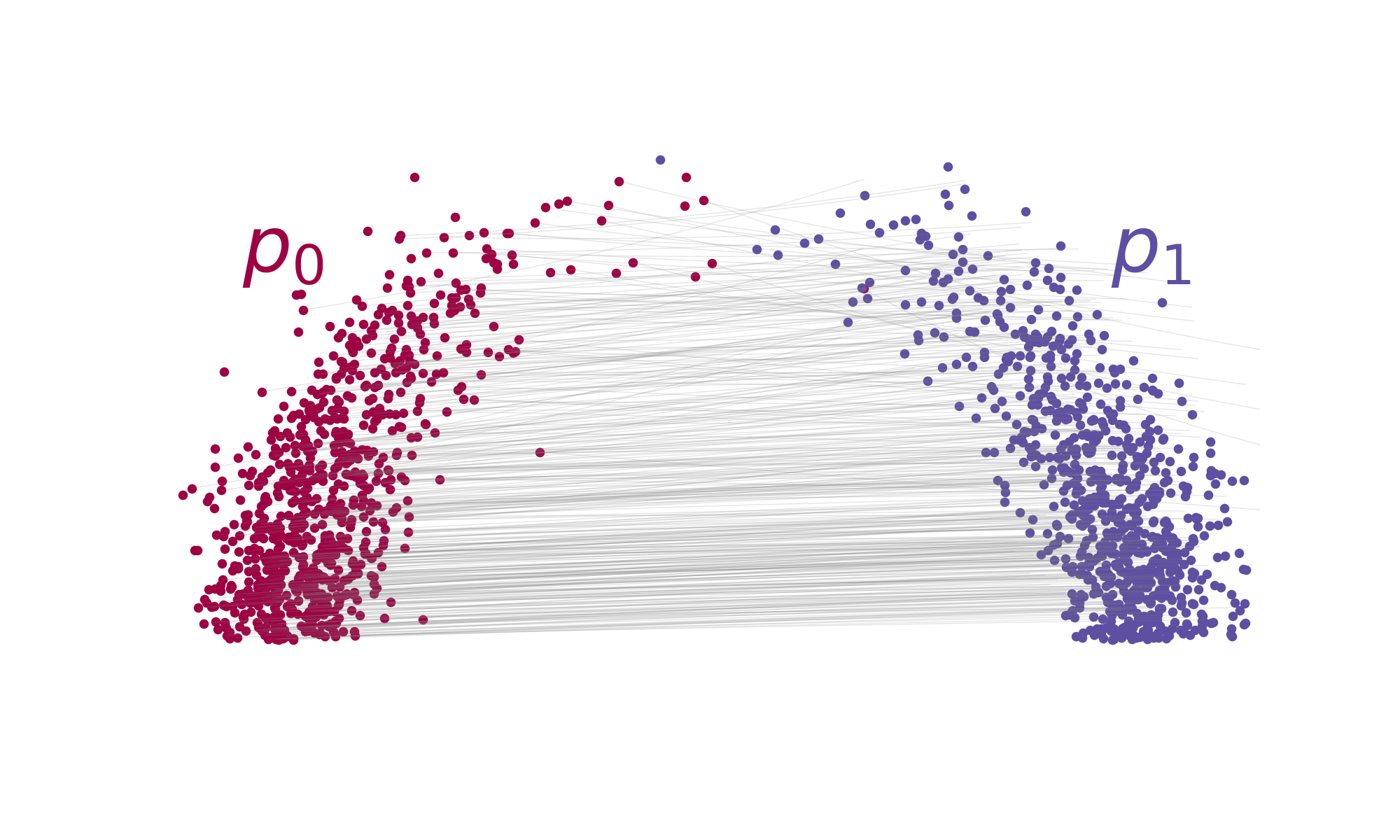}
        \vspace{-1em}
        \label{fig:arch_otcfm}
    \end{minipage}~\begin{minipage}[t]{0.24\textwidth}
        \centering
        {$\OTLAND$}\vspace{0.08\textwidth}
        \includegraphics[width=\textwidth, trim=50 50 50 50, clip]{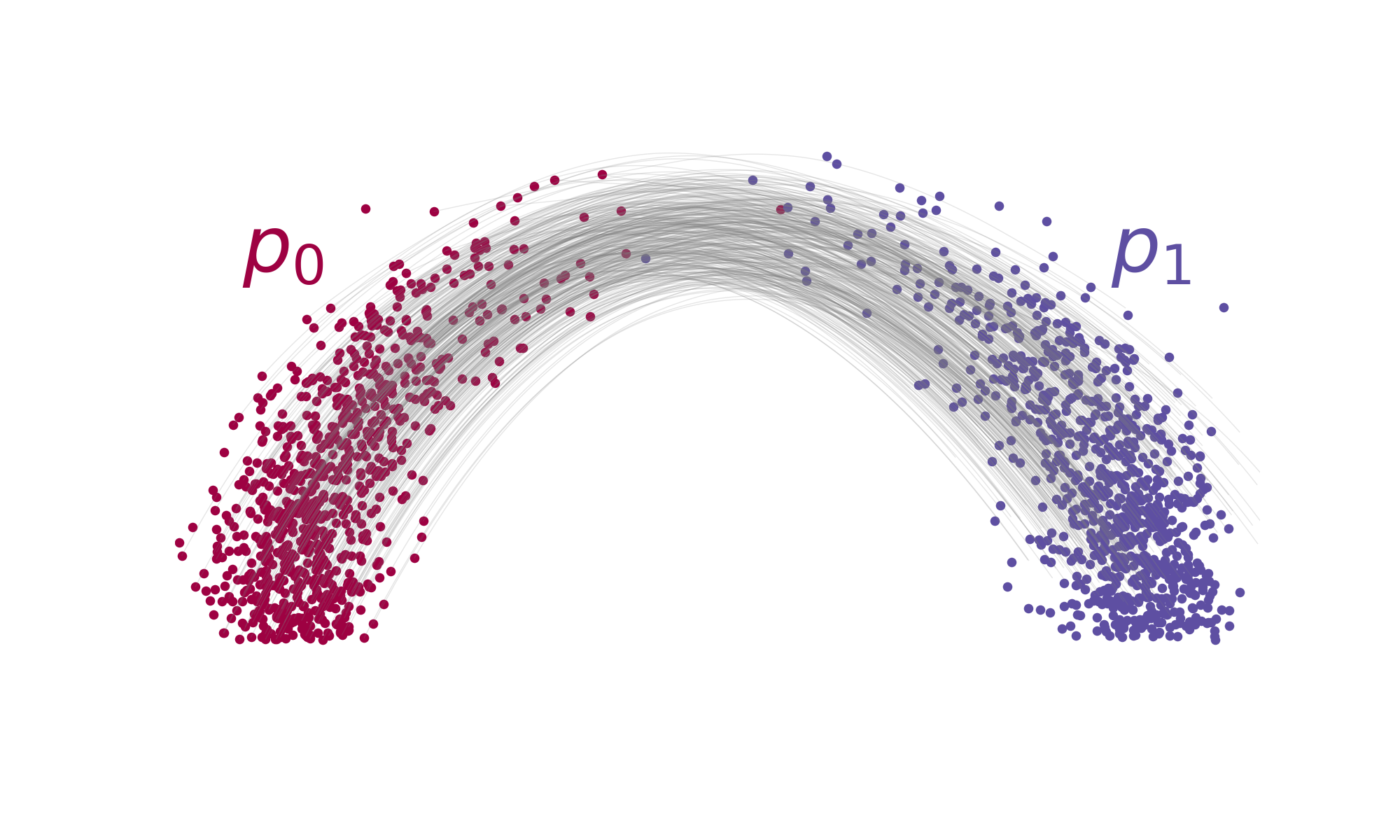}
        \vspace{-1em}
        \label{fig:arch_MFM}
    \end{minipage}
    \begin{minipage}{0.24\textwidth}
        \centering
        \includegraphics[width=\textwidth, trim=60 60 60 60, clip]{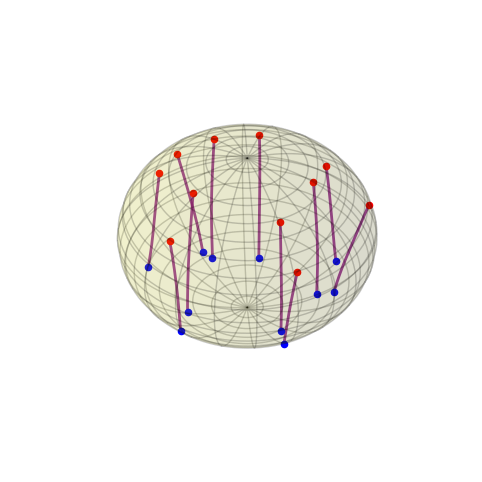}
        \vspace{-1em}
        \label{fig:sphere_otcfm}
    \end{minipage}
    \begin{minipage}{0.24\textwidth}
        \centering
        \includegraphics[width=\textwidth, trim=60 60 60 60, clip]{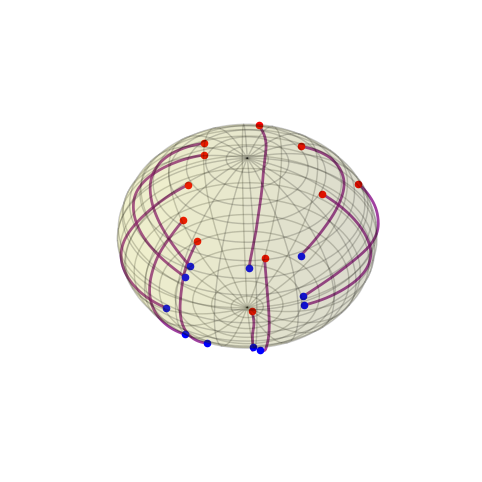}
        \vspace{-1em}
        \label{fig:sphere_MFM}
    \end{minipage}
    \begin{minipage}{0.24\textwidth}
        \centering
        \includegraphics[width=\textwidth, trim=60 60 60 60, clip]{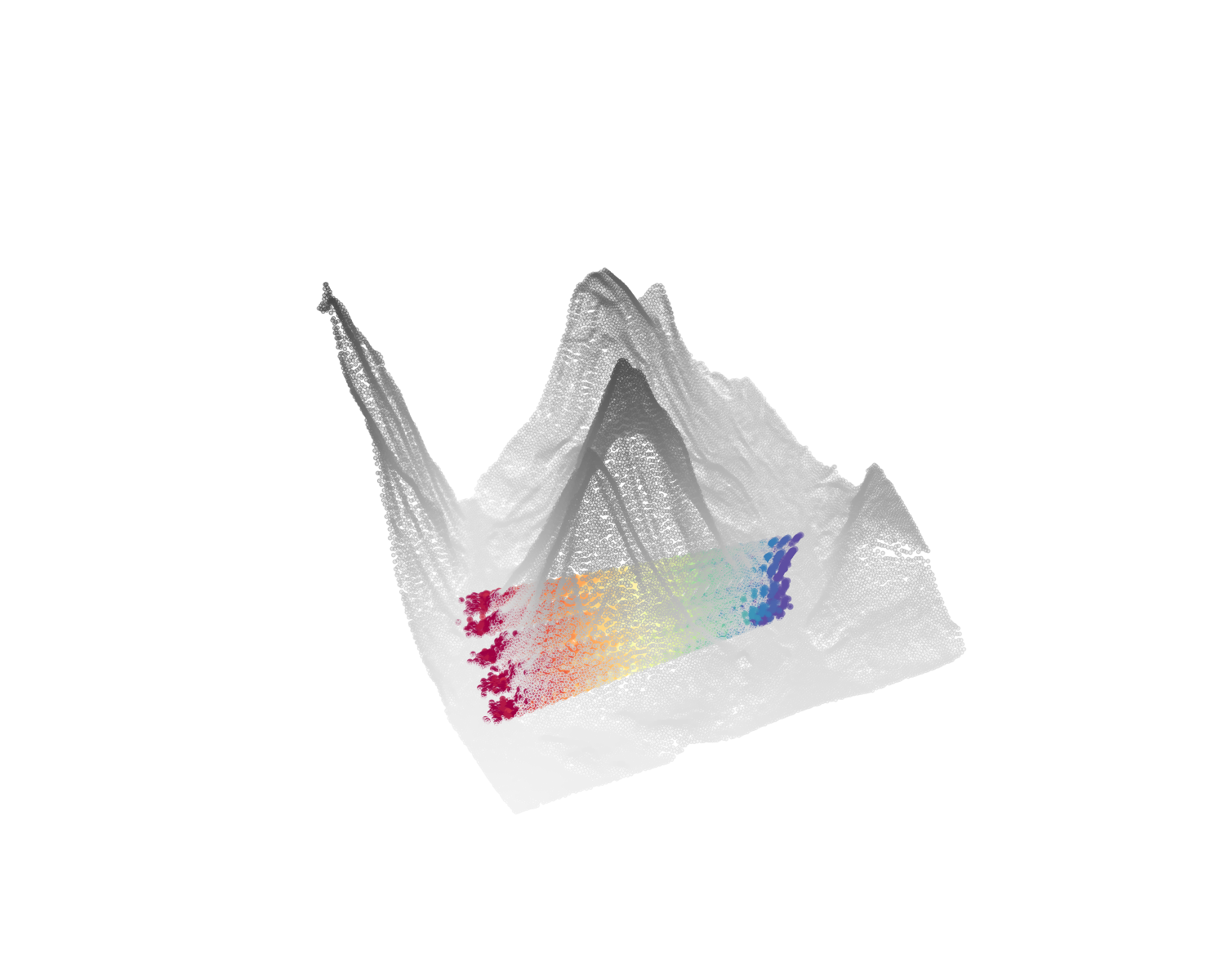}
        \label{fig:lidar_otcfm}
        \vspace{-0.25em}
    \end{minipage}~\begin{minipage}{0.24\textwidth}
        \centering
        \includegraphics[width=\textwidth, trim=60 60 60 60, clip]{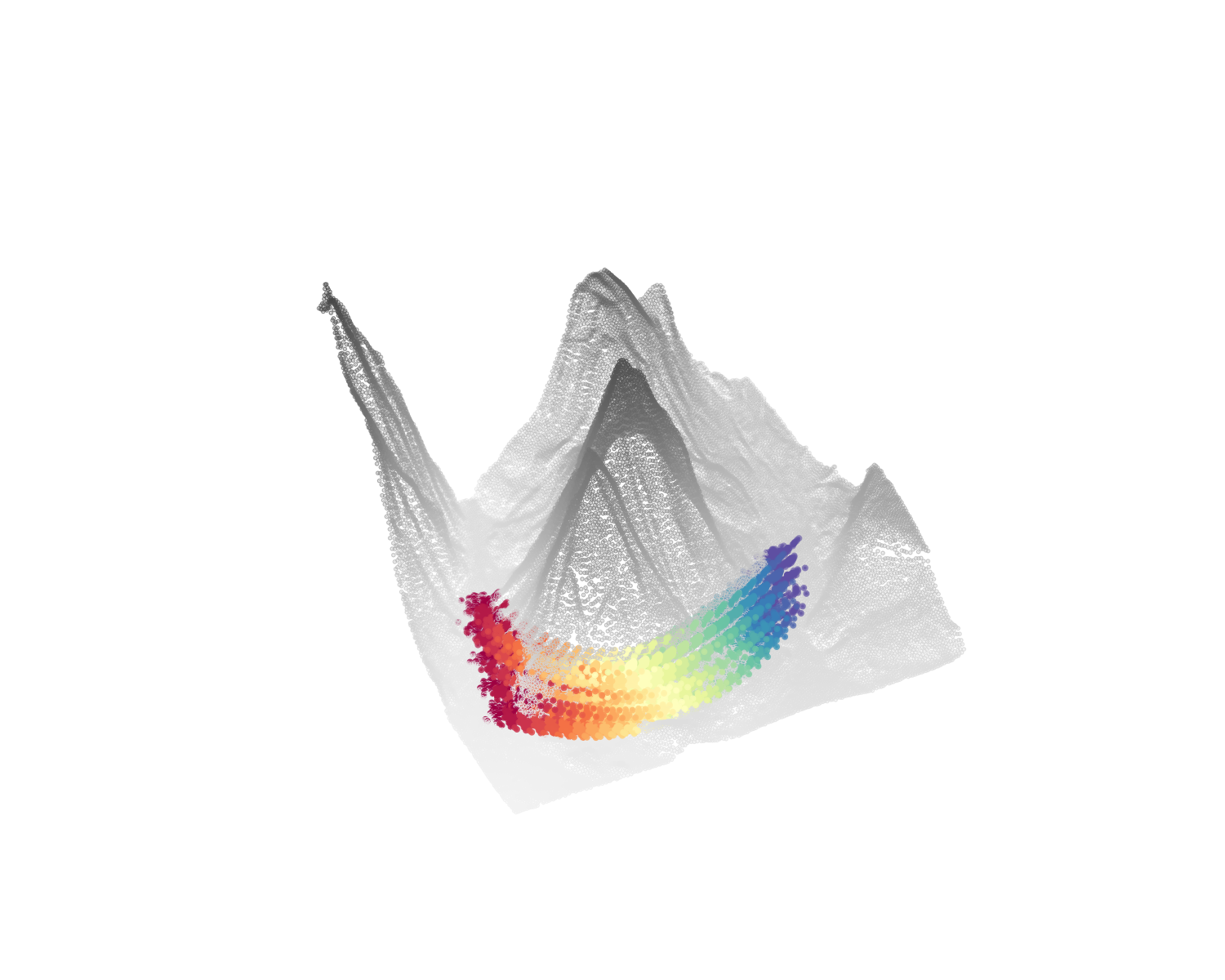}
        \label{fig:lidar_MFM}
        \vspace{-0.25em}
    \end{minipage}
    \caption{\small Interpolants for the Arch dataset (top row), Sphere dataset (middle row) and over LiDAR scans of Mt. Rainier (bottom row). In all cases, learning interpolants that minimize the LAND metric~\eqref{eq:metric_formulation_LAND} leads to more meaningful matchings.}
    \label{fig:all_comparisons}
    
\end{wrapfigure}

\noindent\textbf{The model.} In all the experiments, we test the OT-MFM method detailed in~\S\ref{sec:rbf}. 
As argued in~\S\ref{sec:rbf}, for high-dimensional data, we leverage the RBF metric~\eqref{eq:metric_formulation} and hence train with the objective~\eqref{eq:OT-MFM}. 
We refer to this model as ${\OTRBF}$. Conversely, for low-dimensional data we rely on the LAND metric~\eqref{eq:metric_formulation_LAND}, replacing $\rbf$ with $\LAND$ in~\cref{eq:rbf_metric_loss} and \cref{eq:OT-MFM}. We denote this model as ${\OTLAND}$. {\em Both metrics are task-independent and do not require further manipulation from the modeler}. To assess the impact of metric learning in MFM even without using Optimal Transport for the coupling $q$, we tested MFM with independent coupling on the Arch task and the single cell datasets, denoted as ${\ILAND}$ and ${\IRBF}$.

\looseness=-1
\noindent\textbf{Baselines.} MFM generalizes CFM by learning interpolants that account for the geometry of the data.
As such, in~\S\ref{sec:synth_lidar} and~\S\ref{sec:unpaired}, we focus on validating that OT-MFM leads to more meaningful 
matching than its Euclidean counterpart OT-CFM~\citep{tong2023improving}. 
For single-cell experiments instead, we also compare with a variety of baselines, including models specific to the single-cell domain~\citep{tong2020trajectorynet, koshizuka2022neural}, Schr\"odinger Bridge models~\citep{de2021diffusion, shi2023diffusion, tong2023simulationfree}, and ODE-flow methods~\citep{tong2023improving, neklyudov2023computational}.

\subsection{Synthetic experiments and LiDAR}\label{sec:synth_lidar}

Our first goal is to validate that MFM and specifically the family of interpolants $x_{t,\eta}$ in \cref{eq:trajectory_means}, are expressive enough to model matching of distributions when the data induce a {\em nonlinear} geometry.

\begin{wrapfigure}{r}{0.4\textwidth}
\centering
    \begin{minipage}{0.9\linewidth}
    \captionof{table}{Wasserstein distance between reconstructed marginal at time $1/2$ and ground-truth.} 
    \centering  
    \small
    \makebox[\linewidth]{  
      \begin{tabular}{ll}
        \toprule
        Method & EMD ($\downarrow$) \\ \midrule
        I-CFM &  0.6120 $\pm$ 0.014 \\
        OT-CFM & 0.6081 \(\pm\) 0.023 \\
        $\ILAND$ &   0.1210 $\pm$ 0.020 \\
        $\OTLAND$ & {\bf 0.0813 \(\pm\) 0.009} \\
        \bottomrule
      \end{tabular}
    }
    \label{tab:arch}
  \end{minipage}
  \vspace{-5pt}
\end{wrapfigure} 
To begin with, we consider the Arch dataset in~\citet{tong2020trajectorynet},  
where we have access to the underlying paths. In~\Cref{tab:arch} we report the Wasserstein distance (EMD) between the interpolation according to true dynamics and the marginal at time $1/2$ obtained using the Euclidean baseline OT-CFM and our Riemannian model $\OTLAND$. We see that $\OTLAND$ is able to faithfully reconstruct the dynamics by learning interpolants that minimize the geodesic cost induced by the metric in~\cref{eq:metric_formulation_LAND}. Conversely, straight interpolants fail to stay on the manifold generated by the dynamics (see~\Cref{fig:all_comparisons}).

To further showcase the ability of MFM to learn trajectories that stay close to an unknown underlying manifold without any added noise, we conduct an experiment on a 2D sphere embedded in $\mathbb{R}^3$. We consider source and target distributions that are Gaussians centered at the sphere's poles, with samples lying exactly on the sphere. MFM significantly improves over the Euclidean baseline, CFM, with samples at intermediate times being much closer to the underlying sphere than those from the Euclidean counterpart. This improvement is achieved {\em without explicitly parameterizing the lower-dimensional space}, relying solely on the LAND metric. Quantitative results, including EMD and the distance of the interpolating paths to the sphere, are reported in Appendix~\ref{app:mfm_on_the_sphere}.

Finally, to highlight the ability of MFM to generate meaningful matching, we also compare the interpolants of OT-CFM and $\OTLAND$ for navigations through surfaces scanned by LiDAR \citep{legg}. From~\Cref{fig:all_comparisons} we find that straight paths result in unnatural trajectories, whereas OT-MFM manages to construct meaningful interpolations that better navigate the complex surface. While OT-MFM can be further enhanced using potentials specific to the task, similar to \citep{liu2023generalized}, here we focus on its ability to provide meaningful matching even with a task-agnostic metric.

\subsection{Unpaired translation in latent space}\label{sec:unpaired}
\looseness=-1

To test the advantages of MFM for more meaningful generation, we consider the task of unpaired image translation between dogs and cats in AFHQ~\citep{choi2020stargan}. Specifically, we perform unpaired translation in the {\em latent space} of the Stable Diffusion v1 VAE~\citep{rombach2022high}. \Cref{fig:unpaired_translation} reports a qualitative comparison between OT-CFM and $\OTRBF$. Additionally, a quantitative comparison can be found in \Cref{tab:afhq_metrics}, where we measure both the quality of images via Fr\'{e}chet Inception Distance (FID)~\citep{heusel2017gans} and the perceptual similarity of generated cats to source dogs via Learned Perceptual Image Patch Similarity (LPIPS)~\citep{zhang2018unreasonable}. We see that OT-MFM improves upon the Euclidean baseline. 
Our results using MFM further highlight the role played by the nonlinear geometry associated with latent representations, a topic studied extensively~\S\ref{sec:related}.   
\begin{figure}[b]
    \centering
    \begin{minipage}{0.7\textwidth} 
        \centering
        \includegraphics[width=\textwidth]{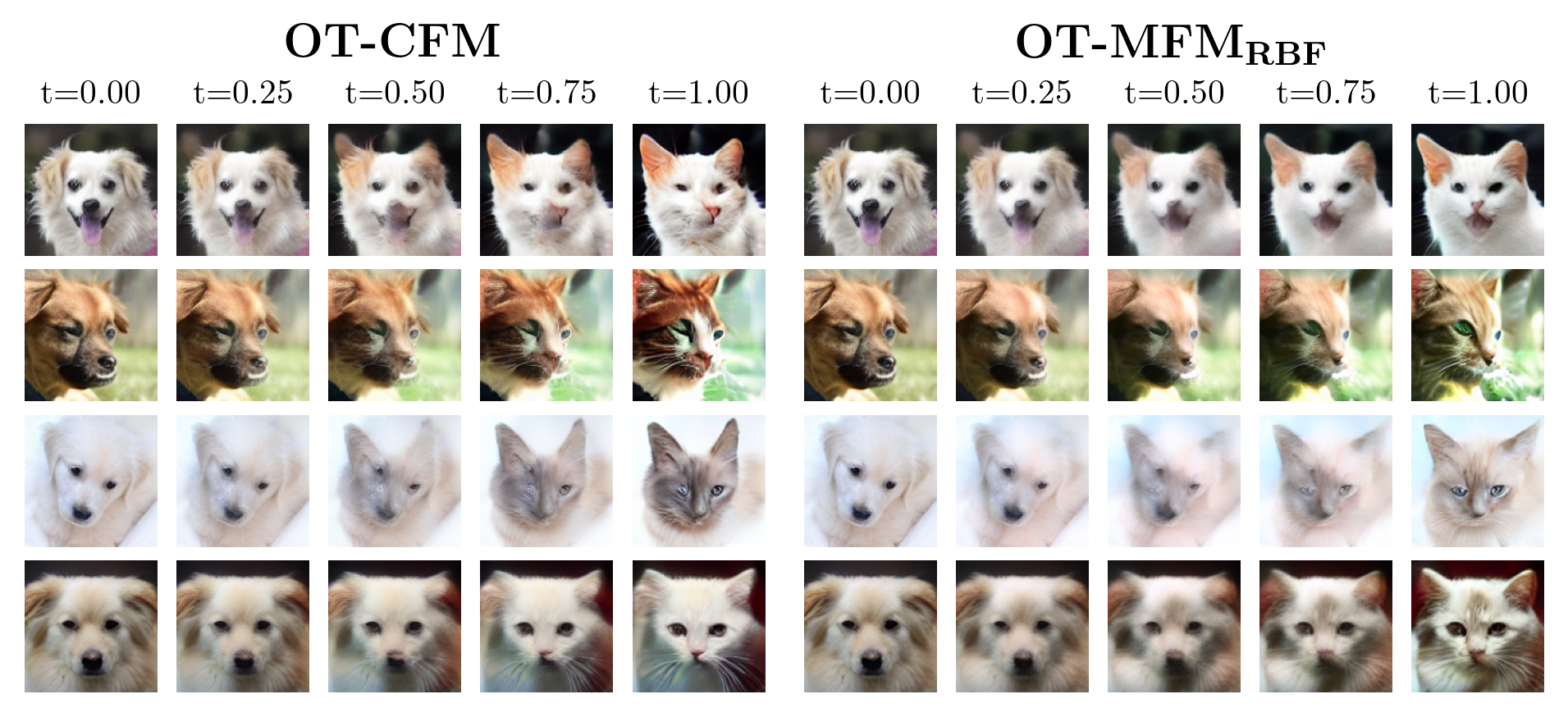}
        \caption{\small Qualitative comparison for image translation. By designing interpolants on the data manifold, $\OTRBF$ better preserves input features.}
        \label{fig:unpaired_translation}
    \end{minipage}%
    \hfill
    \begin{minipage}{0.29\textwidth} 
        \centering
        \vspace{-15pt}
        \captionof{table}{\small FID~($\downarrow$) and LPIPS~($\downarrow$) values on AFHQ for OT-CFM and $\OTRBF$.}
        \label{tab:afhq_metrics}
        \small
        \begin{tabular}{@{}lll@{}}
            \toprule
            Method                & FID   & LPIPS \\ \midrule
            OT-CFM                & 41.42  & 0.512 \\  
            $\OTRBF$                  & {\bf 37.87}  & {\bf 0.502} \\ 
            \bottomrule
        \end{tabular}
    \end{minipage}
    \vspace{-6pt}
\end{figure}

\subsection{Trajectory inference for single-cell data}\label{sec:single_cell}
\looseness=-1
We finally test MFM for reconstructing cell dynamics, a central problem in biomedical applications~\citep{lahnemann2020eleven}, which holds great promise thanks to the advancements of single-cell RNA sequencing (scRNA-seq) ~\citep{macosko2015highly, klein2015droplet}.
\begin{wraptable}{r}{0.55\textwidth} 
\vspace{-4pt}
\centering
\begin{minipage}{0.9\linewidth}
\centering
\caption{\small Wasserstein-1 distance averaged over left-out marginals for 100-dim PCA single-cell data for corresponding datasets. Results averaged over 5 runs.}
\label{tab:100_dim}
\small
\begin{tabular}{lcc}
\toprule
Method  & Cite~(100D) & Multi~(100D) \\ \midrule
SF\(^2\) M-Geo     & 44.498 \(\pm\) 0.416 & 52.203 \(\pm\) 1.957 \\
SF\(^2\) M-Exact   & 46.530 \(\pm\) 0.426 & 52.888 \(\pm\) 1.986 \\ 
\midrule
OT-CFM             & 45.393 \(\pm\) 0.416 & 54.814 \(\pm\) 5.858 \\
I-CFM              & 48.276 \(\pm\) 3.281 & 57.262 \(\pm\) 3.855 \\
WLF-OT             & 44.821 \(\pm\) 0.126 & 55.416 \(\pm\) 6.097 \\
WLF-UOT            & 43.731 \(\pm\) 1.375 & 54.222 \(\pm\) 5.827 \\
WLF-SB             & 46.131 \(\pm\) 0.083 & 55.065 \(\pm\) 5.499 \\
\midrule
$\IRBF$            & 45.987 \(\pm\) 4.014     & 54.197 \(\pm\) 1.408     \\
$\OTRBF$           & 41.784 \(\pm\) 1.020 & 50.906 \(\pm\) 4.627 \\
\bottomrule
\end{tabular}
\end{minipage}
\vspace{-4pt}
\end{wraptable}
Since  in scRNA-seq trajectories cannot be tracked, we only assume access to $K$ unpaired distributions 
describing cell populations at $K$ time points. We then apply the matching objective in~\cref{eq:loss_MFM} between every consecutive time points, sharing parameters for both the vector field $v_{t,\theta}$ and the interpolants $x_{t,\eta}$---see~\cref{eq:app_multiple_timepoints} for how to extend $x_{t,\eta}$ to multiple timepoints. Following the setup of~\citet{schiebinger2019optimal, tong2020trajectorynet, tong2023simulationfree} we perform leave-one-out interpolation, where we measure the Wasserstein-1 distance between the $k$-th left-out density and the one reconstructed after training on the remaining timepoints. We compare OT-MFM and baselines over Embryoid body (EB)~\citep{moon2019visualizing}, and CITE-seq (Cite) and Multiome (Multi) data from~\citep{lance2022multimodal}. 
In~\Cref{tab:5_dim} and~\Cref{tab:100_dim} we consider the first 5 and 100 principal components of the data, respectively---results with 50 principal components can be found in~\Cref{app:sec_exp_details}. We observe that OT-MFM significantly improves upon its Euclidean counterpart OT-CFM, which resonates with the manifold hypothesis for single-cell data \citep{moon2018manifold}. In fact, OT-MFM surpasses all baselines, including those that add biases such as stochasticity (SF$^2$~\citet{tong2023simulationfree}) or mass teleportation (WLF-UOT~\citet{neklyudov2023computational}). 
OT-MFM instead relies on metrics such as LAND and RBF 
to favor interpolations that remain close to the data.

\begin{table}[b]
\centering
\caption{Wasserstein-1 distance ($\downarrow$) averaged over left-out marginals for 5-dim PCA representation of single-cell data for corresponding datasets. Results are averaged over 5 independent runs.}\label{tab:5_dim}
\small
 \resizebox{0.8\textwidth}{!}{
\begin{tabular}{@{}llll@{}}
\toprule
Method                                & Cite                   & EB                      & Multi                  \\ \midrule
Reg. CNF~\citep{finlay2020train}& ---    & 0.825\(\pm\) 0.429     & ---\\
TrajectoryNet~\citep{tong2020trajectorynet}& ---    & 0.848     & ---\\
NLSB~\citep{koshizuka2022neural}& ---    & 0.970     & ---\\ 
\midrule
DSBM~\citep{shi2023diffusion}& 1.705 \(\pm\) 0.160    & 1.775 \(\pm\) 0.429     & 1.873 \(\pm\) 0.631\\
DSB~\citep{de2021diffusion}& 0.953 \(\pm\) 0.140    & 0.862 \(\pm\) 0.023     & 1.079 \(\pm\) 0.117\\
SF\(^2\) M-Sink~\citep{tong2023simulationfree}& 1.054 \(\pm\) 0.087    & 1.198 \(\pm\) 0.342     & 1.098 \(\pm\) 0.308\\
SF\(^2\) M-Geo \citep{tong2023simulationfree} & 1.017 \(\pm\) 0.104    & 0.879 \(\pm\) 0.148     & 1.255 \(\pm\) 0.179    \\
SF\(^2\) M-Exact \citep{tong2023simulationfree} & 0.920 \(\pm\) 0.049  & 0.793 \(\pm\) 0.066     & 0.933 \(\pm\) 0.054    \\ 
\midrule
OT-CFM~\citep{tong2023improving}           & 0.882 \(\pm\) 0.058    & 0.790 \(\pm\) 0.068     & 0.937 \(\pm\) 0.054    \\
I-CFM~\citep{tong2023improving}            & 0.965 \(\pm\) 0.111    & 0.872 \(\pm\) 0.087     & 1.085 \(\pm\) 0.099    \\ 
SB-CFM~\citep{tong2023improving}           & 1.067 \(\pm\) 0.107    & 1.221 \(\pm\) 0.380     & 1.129 \(\pm\) 0.363    \\
WLF-UOT~\citep{neklyudov2023computational}& 0.733 \(\pm\) 0.063    & 0.738 \(\pm\)  0.014    & 0.911 \(\pm\) 0.147\\
WLF-SB~\citep{neklyudov2023computational} & 0.797 \(\pm\) 0.022    & 0.746 \(\pm\) 0.016    & 0.950 \(\pm\) 0.205    \\
WLF-OT~\citep{neklyudov2023computational}& 0.802 \(\pm\) 0.029    & 0.742 \(\pm\) 0.012     & 0.949 \(\pm\) 0.211\\
\midrule
$\ILAND$            & 0.916 $\pm$ 0.124 & 0.822 $\pm$ 0.042 & 1.053 $\pm$ 0.095 \\
$\OTLAND$                    &      0.724   \(\pm\) 0.070               &             0.713   \(\pm\) 0.039           &        0.890   \(\pm\) 0.123                \\ 

\bottomrule
\end{tabular}}
\end{table}

\section{Related Work}\label{sec:related}

\looseness=-1
\xhdr{Geometry-aware generative models} The 
manifold hypothesis \citep{bengio2013representation} has been studied in the context of manifold learning \citep{tenenbaum2000global, belkin2003laplacian} and metric learning \citep{xing2002distance, weinberger2009distance, hauberg2012geometric}. Recently, this has also been analyzed in relation to generative models for obtaining meaningful interpolations \citep{arvanitidis2017latent, arvanitidis2021geometrically, chadebec2022geometric}, diagnosing model instability \citep{cornish2020relaxing, loaiza2022diagnosing, loaiza2024deep}, assessing the ability to perform dimensionality reduction \citep{stanczuk2022your, pidstrigach2022score} and for the improved learning and representation of curved, low-dimensional data manifolds \citep{dupont2019augmented, schonsheck2019chart, horvat2021denoising, yonghyeon2021regularized, de2022convergence, jang2022geometrically, lee2022statistical, lee2023explicit,  nazari2023geometric}. Closely related is the extension of generative models to settings where the ambient space itself is a 
Riemannian manifold \citep{mathieu2020riemannian, lou2020neural, falorsi2021continuous, de2022riemannian, huang2022riemannian, rozen2021moser, ben2022matching, chen2023riemannian, jo2023generative}. Particularly, \cite{maoutsa2023geometric} utilized a data-dependent geodesic solver \citep{arvanitidis2019fast} to refine drift estimation in stochastic differential equations. Several studies extended beyond the standard linear matching process to meet specific task requirements, but have not accounted for the geometry that the data naturally forms \citep{liu2023generalized, bartosh2024neural, neklyudov2023computational}. 

\looseness=-1
\xhdr{Trajectory inference} Reconstructing dynamics from cross-sectional distributions \citep{hashimoto2016learning, lavenant2021towards} is an important problem within the natural sciences, especially in the context of single-cell analysis \citep{macosko2015highly, moon2018manifold, schiebinger2019optimal}. Recently, diffusion and Continuous Normalizing Flow (CNFs) based methods have been proposed \citep{tong2020trajectorynet, bunne2022proximal, bunne2023schrodinger, huguet2022manifold,koshizuka2022neural} 
but require simulations, whereas \cite{tong2023simulationfree, neklyudov2023action, palma2023modelling} allow for simulation-free training. In particular, \cite{huguet2022manifold, palma2023modelling} regularize CNFs in a latent space to enforce that straight paths correspond to interpolations on the original data manifold. Finally, \citet{scarvelis2022riemannian} propose a solution to the trajectory inference problem for cellular data, which depends on a  regularization of vector fields  with respect to a learned metric similar to \cref{eq:geometric_loss}. Crucially though, their framework requires simulations in training and is not immediately extended to more general matching objectives and applications as for MFM. In this regard, we observe that one can also adopt the metric-learning scheme of \citet{scarvelis2022riemannian} in MFM, replacing $\LAND$ and $\rbf$ introduced in~\S\ref{sec:metric}.

\section{Conclusions and Limitations}\label{sec:conclusions}

\looseness=-1
We have presented \model, a simulation-free framework that generalizes Conditional Flow Matching to design probability paths whose support lies on the data manifold. In MFM, this is achieved via interpolants that minimize the geodesic cost of a data-dependent Riemannian metric. 
We have empirically shown that instances of MFM using prescribed task-agnostic metrics, surpass Euclidean baselines, with emphasis on single-cell dynamics reconstruction. While the universality of the metrics proposed in~\S\ref{sec:metric} is a benefit, we have not investigated how to further encode biases into the metric that are specific to the downstream task---a topic reserved for future work. Additionally, the principle of learning interpolants that minimize a geodesic cost can also be adapted to score-based generative models such as diffusion models, beyond CFM. When relying on the OT coupling, standard limitations of using OT for high-dimensional problems with large datasets may arise.
Lastly, our approach requires the data to be embedded in Euclidean space for the interpolants to be defined; it is an interesting direction to explore how one can learn interpolants that minimize a data-dependent metric even when the ambient space itself is not Euclidean.

\section*{Acknowledgements}\label{sec:conclusions}

\looseness=-1
KK is supported by the EPSRC CDT in Health Data Science (EP/S02428X/1). PP and LZ are supported by the EPSRC CDT in Modern Statistics and Statistical Machine Learning (EP/S023151/1) and acknowledge helpful discussions with Yee Whye Teh. AJB is partially supported by an NSERC Post-doc fellowship and an EPSRC Turing AI World-Leading Research Fellowship. FDG is supported by an EPSRC Turing AI World-Leading Research Fellowship. MB is partially supported by EPSRC Turing AI World-Leading Research Fellowship No. EP/X040062/1 and EPSRC AI Hub on Mathematical Foundations of Intelligence: An "Erlangen Programme" for AI No. EP/Y028872/1

\bibliography{refs}
\bibliographystyle{abbrvnat}

\clearpage

\appendix

\section*{Outline of Appendix}
In \Cref{app:primer} we give a brief overview of relevant notions from differential and Riemannian geometry. \Cref{app:sec_3_proofs} provides more details for \Cref{sec:metric_flow_matching} including the formal statement and proof of \Cref{prop:informal}. We also derive how to rigorously extend flow matching to the Riemannian manifold induced by a data-dependent metric. We report a pseudocode for MFM in \Cref{alg:mfm_vf}. \Cref{app:sec_metrics} provides more details regarding the data-dependent Riemannian metrics we use, and relevant training procedures.  In \Cref{app:sec_exp_details} we supply more details regarding the various experiments. \Cref{app:singlecell}, \Cref{app:supp_unpaired_translation}, and \Cref{app:supp_lidar} contain supplementary figures and tables for single-cell reconstruction, unpaired translation, and LiDAR tasks, respectively. Finally, in \Cref{app:mfm_on_the_sphere}, we outline the quantitative evaluation results for MFM on the sphere experiment.

{\bf All exact hyperparameters used and reproducible code can be found at} \url{https://github.com/kksniak/metric-flow-matching.git}.

\section{Primer on Riemannian Geometry}\label{app:primer}

Riemannian geometry is the study of smooth manifolds $\mathcal{M}$ equipped with a (Riemannian) metric $\metric$. Intuitively, this corresponds to spaces that can be considered locally Euclidean, and which allow for a consistent notion of measuring distances, angles, curvature, shortest paths, etc.\ on abstract geometric objects. We provide a primer on the relevant concepts of Riemannian geometry for our work, covering smooth manifolds and their tangent spaces, Riemannian metrics and geodesics, and integration on Riemannian manifolds. For a comprehensive introduction, see \cite{lee2012smooth}.

\looseness=-1
\xhdr{Smooth manifolds} Formally, we say a topological space\footnote{With the technical assumptions that $\mathcal{M}$ is second-countable and Hausdorff.} $\mathcal{M}$ is a $d$-dimensional smooth manifold if we have a collection of charts $(U_i, \varphi_i)$ where $U_i$ are open subsets of $\mathcal{M}$ and $\cup_i U_i=\mathcal{M}$, $\varphi_i:U_i\to\R^d$ are homeomorphisms onto their image, and when $U_i\cap U_j\ne\emptyset$, the transition maps $\varphi_j\circ\varphi_i^{-1}:\varphi_i(U_i\cap U_j)\to \varphi_j(U_i\cap U_j)$ are diffeomorphisms. These charts allow us to represent quantities on $\mathcal{M}$ through the local coordinates obtained by mapping back to Euclidean space via $\varphi_i$, as well as allowing us to define smooth maps between manifolds. 

In particular, we define a smooth path in $\mathcal{M}$ passing through $x\in\mathcal{M}$ as a function $\gamma:(-\epsilon, \epsilon)\to\mathcal{M}$, for some $\epsilon>0$ and $\gamma(0)=x$, such that the local coordinate representation $\varphi_i\circ\gamma$ of $\gamma$ is smooth in the standard sense (for any suitable chart $(U_i, \varphi_i)$). The derivatives $\dot{\gamma}(0)$ of smooth paths $\gamma$ passing through $x\in\mathcal{M}$ form a $d$-dimensional vector space called the tangent space $T_x\mathcal{M}$ at $x$. We can represent $T_x\mathcal{M}$ in local coordinates with $\varphi_i$ by the identification of $\dot{\gamma}(0)\in T_x\mathcal{M}$ to $(a'_1(0), \ldots, a_d'(0))^\top\in\R^d$ where $(a_1(t), \ldots, a_d(t))^\top = \varphi_i\circ\gamma(t)$.

\looseness=-1
\xhdr{Riemannian structure} To introduce geometric notions of length and distances to smooth manifolds, we define a Riemannian metric\footnote{Formally, this is a choice of a smooth symmetric covariant 2-tensor field on $\mathcal{M}$ which is positive definite at each point.} $\metric$ on $\mathcal{M}$ as a map providing a smooth assignment of points $x\in\mathcal{M}$ on the manifold to a positive definite inner product $\langle \cdot,\cdot\rangle_{\metric(x)}$ defined on the corresponding tangent space $T_x\mathcal{M}$. In local coordinates about $x\in\mathcal{M}$ given by a suitable chart $(U_i, \varphi_i)$, we have the local representation $\langle v,w \rangle_{\metric(x)} = v^\top \Metric(x) w$ where $\Metric(x)\in\R^{d\times d}$ is a positive definite matrix (which implicitly depends on the choice of chart). We call the pair $(\mathcal{M}, \metric)$ a Riemannian manifold.

Through the Riemannian metric, we can now define the norm of tangent vectors by $\|v\|_{\metric(x)} = \langle v, v\rangle_{\metric(x)}^{1/2}$ for $v\in T_x\mathcal{M}$, as well as the length of a smooth path $\gamma:[0, 1]\to\mathcal{M}$ by
\begin{equation}
    {\rm Len}(\gamma) = \int_0^1 \|\dot{\gamma}(t)\|_{g(\gamma(t))} dt.
\end{equation}
We can then define a geodesic $\gamma^*$ between $x_0$ and $x_1$ in $\mathcal{M}$ as the shortest possible path between the two points - i.e.
\begin{equation}
    \gamma^* = \argmin_{\gamma} \int_0^1 \| \dot{\gamma}(t)\|_{\metric(\gamma(t))}\,dt, \quad \gamma(0) = x_0, \,\, \gamma(1) = x_1.
\end{equation}

We assume that all Riemannian manifolds being considered are (geodesically) complete, meaning that geodesics can be extended indefinitely. In particular, for any pair of points $x_0,x_1$, there exists a unique geodesic $\gamma_t^*$ starting at $x_0$ and ending at $x_1$.  

\looseness=-1
\xhdr{Integration on Riemannian manifolds} To introduce integration on Riemannian manifolds, we use the fact that under the technical assumption that $\mathcal{M}$ is orientable, the Riemannian manifold $(\mathcal{M}, g)$ has a canonical volume form $d{\rm vol}$. This can be used to define a measure on $\mathcal{M}$, where in local coordinates, we have that $d{\rm vol}(x) = \sqrt{|\Metric(x)|}dx$ where $dx$ denotes the Lebesgue measure in $\R^d$ and $|\cdot|$ denotes the determinant. Hence, for a chart $(U_i, \varphi_i)$ and a continuous function $f:\mathcal{M}\to\R$ which is compactly supported in $U_i$, we define the integral
\begin{equation}
    \int_\mathcal{M} f d{\rm vol} = \int_{\varphi_i(U_i)} f\circ\varphi_i^{-1}(x)\sqrt{|\Metric(x)|} dx.
\end{equation}
This definition can be extended to more general functions through the use of partitions of unity and the Riesz–Markov–Kakutani representation theorem.

\looseness=-1
\xhdr{The Riemannian geometry of Metric Flow Matching} Finally, we note that in our work, we take our smooth manifold to be $\mathcal{M}=\R^d$ with the trivial chart $U_i=\R^d, \varphi_i=\text{id}$. This means we can work in the usual Euclidean coordinates instead of requiring charts and local coordinates, simplifying our framework. For example, we can define $\metric$ through $\Metric$ directly in Definition \ref{def:data_dependent_metric} without the need to check consistency across the choice of local coordinates, and we have the trivial identification of $T_x\mathcal{M}$ to $\R^d$. In addition, for a continuous $f:\mathcal{M}\to\R$, we have the simplified definition of the Riemannian integral (when the integral exists) as
\begin{equation}
    \int_\mathcal{M} f d{\rm vol} = \int_{\R^d} f(x)\sqrt{|\Metric(x)|} dx.
\end{equation}
We use this to define probability densities on $(\mathcal{M}, \metric)$ to extend CFM to our setting in \S\ref{app:extend_cfm}.

\section{Additional Details for \Cref{sec:metric_flow_matching}}\label{app:sec_3_proofs}
To begin with, we provide a formal statement covering \Cref{prop:informal} and report a proof below. We introduce some notation. We write $\gamma_t\subset U$ when the image set of $\gamma:[0,1]\rightarrow\R^d$ is contained in $U$, i.e. $\gamma_t\in U$ for each $t\in [0,1]$. Moreover, we let  
\[
B_{r}(\gD):=\{ y\in\R^d:\, \exists\, x_i\in\gD\,\, \text{s.t.}\,\, \|y - x_i\| \leq r\},
\]
denote the set of points in $\R^d$ whose distance from the dataset $\gD$ is at most $r>0$. 
\begin{thm}[name=Formal statement of \Cref{prop:informal}]\label{prop:geodesics}
Consider a closed dataset $\gD\subset \R^d$ (e.g. finite). Assume that for each $(x_0,x_1)\sim q$, there exists at least a path $\gamma_t$ connecting $x_0$ to $x_1$ whose length  is at most $\Gamma$ and such that $\gamma_t\subset B_\delta(\gD)$, with $\delta > 0$. Let $\kappa > 0, \rho > \delta$ and $\metric$ be any data-dependent metric satisfying: (i) $v^\top \Metric(x;\gD)v \geq \kappa\| v\|^2$ for each $x\in \R^d\setminus B_{\rho}(\gD)$ and $v\in \R^d$; (ii) $\|\Metric(x;\gD)\| \leq \kappa(\rho/\Gamma)^2$ for each $x\in B_\delta(\gD)$. Then for any $(x_0,x_1)\sim q$, the geodesic $\gamma_t^*$ of $\metric$ connecting $x_0$ to $x_1$ satisfies $\gamma_t^*\subset B_{2\rho}(\gD)$.
\end{thm}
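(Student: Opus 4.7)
The plan is to argue by contradiction via a length comparison that exploits the two-sided bounds the metric satisfies near and far from the data. First I would use hypothesis (ii) to upper bound the $\metric$-length of the reference path $\gamma_t$. Since $\gamma_t \subset B_\delta(\gD)$, each tangent vector satisfies
\[
\|\dot\gamma_t\|_{\metric(\gamma_t)} \;\leq\; \sqrt{\kappa}\,(\rho/\Gamma)\,\|\dot\gamma_t\|,
\]
so integrating and using that the Euclidean length of $\gamma_t$ is at most $\Gamma$ yields $\mathrm{Len}_\metric(\gamma_t) \leq \sqrt{\kappa}\,\rho$. Because $\gamma_t^*$ minimises $\metric$-length among all paths joining $x_0$ and $x_1$, it follows that $\mathrm{Len}_\metric(\gamma_t^*) \leq \sqrt{\kappa}\,\rho$.

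Next, suppose for contradiction that $\gamma_t^* \not\subset B_{2\rho}(\gD)$, so there exists a time $t^\ast \in (0,1)$ with $d(\gamma_{t^\ast}^*, \gD) > 2\rho$. Because $\rho > \delta$ and the endpoints satisfy $d(\gamma_0^*, \gD), d(\gamma_1^*, \gD) \leq \delta$, continuity of $t \mapsto d(\gamma_t^*, \gD)$ lets me pick $t_1 < t^\ast < t_2$ with $d(\gamma_{t_i}^*, \gD) = \rho$ for $i=1,2$ and $d(\gamma_t^*, \gD) > \rho$ for all $t \in (t_1, t_2)$. On this subinterval, condition (i) gives $\|\dot\gamma_t^*\|_{\metric(\gamma_t^*)} \geq \sqrt{\kappa}\,\|\dot\gamma_t^*\|$, so
\[
\mathrm{Len}_\metric(\gamma_t^*|_{[t_1,t_2]}) \;\geq\; \sqrt{\kappa}\,\mathrm{Len}_{\mathrm{Eucl}}(\gamma_t^*|_{[t_1,t_2]}).
\]

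Finally, I would bound the Euclidean length of the excursion from below by $\|\gamma_{t_1}^* - \gamma_{t^\ast}^*\| + \|\gamma_{t^\ast}^* - \gamma_{t_2}^*\|$; the reverse triangle inequality applied to $d(\cdot, \gD)$ shows each of these two summands strictly exceeds $2\rho - \rho = \rho$, giving a total $> 2\rho$. Combining with the previous display yields $\mathrm{Len}_\metric(\gamma_t^*) > 2\sqrt{\kappa}\,\rho$, contradicting the upper bound $\sqrt{\kappa}\,\rho$ from the first step. Hence $\gamma_t^* \subset B_{2\rho}(\gD)$.

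The main subtlety is the calibration: the factor $\kappa(\rho/\Gamma)^2$ in (ii) is chosen precisely so that rescaling the Euclidean length $\Gamma$ of the reference path produces an upper bound of $\sqrt{\kappa}\rho$, which is then beaten by the $> 2\sqrt{\kappa}\rho$ lower bound forced by condition (i) on any excursion beyond $B_{2\rho}(\gD)$. The rest is essentially bookkeeping: continuity gives the crossing times $t_1, t_2$ (using $\rho > \delta$ to ensure the endpoints strictly lie inside $B_\rho(\gD)$), and geodesic completeness of $(\R^d, \metric)$ ensures $\gamma_t^*$ exists in the first place.
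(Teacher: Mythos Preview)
Your proposal is correct and follows essentially the same contradiction-by-length-comparison argument as the paper: upper bound the $\metric$-length of the geodesic via the reference path in $B_\delta(\gD)$ using (ii), lower bound it via an excursion outside $B_\rho(\gD)$ using (i), and derive a contradiction. The only cosmetic differences are that you exploit both the entering and exiting legs of the excursion (yielding $>2\sqrt{\kappa}\rho$ rather than the paper's $>\sqrt{\kappa}\rho$, an unneeded but harmless sharpening) and invoke the $1$-Lipschitz property of $d(\cdot,\gD)$ directly where the paper uses an $\epsilon$-triangle-inequality argument.
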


\begin{proof}[Proof of \Cref{prop:geodesics}]
We argue by contradiction and assume that there exist $x\in\R^d\setminus B_{2\rho}(\gD)$ and $(x_0,x_1)\sim q$ such that the geodesic $\gamma_t^*$ of $\metric$ connecting $x_0$ to $x_1$ passes through $x$, i.e. there is a time $t_0\in (0,1)$ such that $\gamma^*_{t_0} = x$. 

By assumption, there exists a path $\gamma_t$ that connects $x_0$ to $x_1$ and stays within $B_\delta(\gD)$, which means that $x_0 = \gamma_0 = \gamma_0^* \in B_\delta(\gD)$.
Since $\gD$ is closed, the function $x\mapsto d_{\rm E}(x,\gD) := \inf_{y\in\gD} \|x-y\|$, i.e. the Euclidean distance of $x$ from the dataset $\gD$, is continuous. In particular, $t\mapsto d_{\mathrm{E},\gD}(t):= d_{\rm E}(\gamma_t^*, \gD)$ is also continuous, due to the geodesic being a smooth function in the interval $[0,1]$. From the continuity of $d_{\mathrm{E},\gD}$ and the fact that $d_{\mathrm{E},\gD}(0) \leq \delta$ and $d_{\mathrm{E},\gD}(t_0) > 2\rho$, it follows that there must be a time $0 < t' < t_0$ such that $d_{\mathrm{E},\gD}(t) \geq \rho$ for all $t\in (t',t_0]$ and $d_{\mathrm{E},\gD}(t') = \rho$. 

If we unpack the definition of $d_{\mathrm{E}, \gD}$, we have just shown that $\gamma_t^*\in \R^d\setminus B_{\rho}(\gD)$ for all $t\in (t',t_0]$. Accordingly, we can estimate the length of $\gamma_t^*$ with respect to the Riemannian metric $\metric$ as
\begin{align*}
    {\rm Len}_\metric(\gamma_t^*) &= \int_0^1 \|\dot{\gamma}_t^*\|_{\metric(\gamma^*_t)}\,dt > \int_{t'}^{t_0} \|\dot{\gamma}_t^*\|_{\metric(\gamma^*_t)}\,dt \\
    &= \int_{t'}^{t_0} \sqrt{(\dot{\gamma}_t^*)^\top\Metric(\gamma_t^*;\gD)\dot{\gamma}_t^*}\,dt \geq \sqrt{\kappa} \int_{t'}^{t_0} \|\dot{\gamma}_t^*\|\,dt \geq \sqrt{\kappa}\|\gamma_{t'}^* - \gamma_{t_0}^*\|,
\end{align*}
where in the second-to-last inequality we have used the assumption (i) on the metric having minimal eigenvalue $\sqrt{\kappa}$ in $\R^d\setminus B_\rho(\gD)$, while the final inequality simply follows from the Euclidean length of any curve between the points $\gamma_{t'}^*$ and $\gamma^*_{t_0}$ being larger than their Euclidean distance. 

We claim that $\|\gamma_{t'}^* - \gamma_{t_0}^*\| \geq \rho$. To validate the latter point, take $\epsilon > 0$. It follows that there exists $x_i\in\gD$ such that $\|x_i - \gamma_{t'}^* \| \leq \rho + \epsilon$, because $d_{\mathrm{E},\gD}(t') = \rho$,  i.e. $\gamma_{t'}^*\in B_{\rho}(\gD)$. If $\|\gamma_{t'}^* - \gamma_{t_0}^*\| < \rho$, then we could apply the triangle inequality and derive that
\[
\|\gamma_{t_0}^* - x_i\|\leq \| \gamma_{t_0}^* - \gamma_{t'}^* \| + \| \gamma_{t'}^* - x_i\| < \rho + \rho + \epsilon = 2\rho + \epsilon.
\]
Since $\epsilon > 0$ was arbitrary, it would follow that $\gamma_{t_0}^* \equiv x \in B_{2\rho}(\gD)$, which is a contradiction to our starting point. Therefore, $\|\gamma_{t'}^* - \gamma_{t_0}^*\| \geq \rho$, and hence
\begin{equation}\label{eq:app:len_1}
    {\rm Len}_\metric(\gamma_t^*) > \sqrt{\kappa}\rho.
\end{equation}
On the other hand, the assumptions guarantee the existence of another path $\gamma_t$ connecting $x_0$ to $x_1$ whose image is always contained in $B_\delta(\gD)$. It follows that
\begin{align*}
    {\rm Len}_\metric(\gamma_t) &= \int_0^1\|\dot{\gamma}_t\|_{\metric(\gamma_t)}\,dt =  \int_{0}^{1} \sqrt{(\dot{\gamma}_t^*)^\top\Metric(\gamma_t^*;\gD)\dot{\gamma}_t^*}\,dt \\
    &\leq \sqrt{\kappa}\frac{\rho}{\Gamma}\int_0^1\|\dot{\gamma}_t\|\,dt = \sqrt{\kappa}\frac{\rho}{\Gamma}\Gamma = \sqrt{\kappa}\rho,
\end{align*}
where we have used the assumption (ii) on $\metric$ and that the length of $\gamma_t$ is at most $\Gamma$. We can then combine the last inequality and \cref{eq:app:len_1}, and conclude that
\[
{\rm Len}_\metric(\gamma_t) \leq \sqrt{\kappa}\rho < {\rm Len}_\metric(\gamma_t^*),
\]
which means that we have found a path connecting $x_0$ to $x_1$, whose length with respect to the metric $\metric$ is shorter than the one of the geodesic $\gamma_t^*$ from $x_0$ to $x_1$. This is a contradiction and concludes the proof.
\end{proof}

\subsection{Extending CFM to the manifold $(\R^d,\metric)$}\label{app:extend_cfm}

In this subsection, we demonstrate how to generalize the CFM objective in \cref{eq:CFM_flow} to the Riemannian manifold $(\R^d,\metric)$, defined by a data-dependent metric $\metric$ as per \Cref{def:data_dependent_metric}. 

\xhdr{Densities on the manifold} To begin with, we extend the densities $p_0,p_1$ and the joint density $q$ to valid densities on the manifold $(\R^d,\metric)$. First, we recall that the Riemannian volume form induced by $\metric$ can be written in coordinates as 
\[
d{\rm vol}(x) = \sqrt{|\Metric(x;\gD)|}dx,
\]
where $|\cdot|$ denotes the determinant of the matrix $\Metric(x;\gD)$, while $dx$ is the standard Lebesgue measure on $\R^d$. Accordingly, given a density $p \in \mathbb{P}(\R^d)$, we can derive the associated density $\hat{p}\in \mathbb{P}(\R^d,\metric)$ by rescaling:
\[
\hat{p}(x) := \frac{p(x)}{\sqrt{|\Metric(x;\gD)|}},
\]
which guarantees that $\hat{p}$ is continuous, positive, and 
\[
\int_{\R^d}\hat{p}(x)d{\rm vol}(x) = \int_{\R^d}p(x)dx = 1.
\]
This in particular applies to the marginals $p_0,p_1$ given by our problem. A similar argument applies to the joint density $q$ whose marginals are $p_0$ and $p_1$, respectively. In fact, we can now define a density $\hat{q}$ on the product manifold, i.e. $\hat{q}\in \mathbb{P}(\R^d \times \R^d, g\times g)$, by simply taking
\[
\hat{q}(x_0,x_1) := \frac{q(x_0,x_1)}{\sqrt{|\Metric(x_0;\gD)\cdot\Metric(x_1;\gD)}|},
\]
where the denominator is exactly the pointwise volume form induced by the product metric $\metric \times \metric$. Therefore, the MFM objective in \eqref{eq:loss_MFM} is interpreted as
\begin{align*}
    \gL_{\rm MFM}(\theta) &= \mathbb{E}_{t,(x_0,x_1)\sim \hat{q}}\left[ \| v_{t,\theta}(x_{t,\eta^*}) - \dot{x}_{t,\eta^*} \|^2_{\metric(x_{t,\eta^*})} \right] \\
    &= \mathbb{E}_t \,\int_{\R^d \times \R^d}  \| v_{t,\theta}(x_{t,\eta^*}) - \dot{x}_{t,\eta^*} \|^2_{\metric(x_{t,\eta^*})} \hat{q}(x_0,x_1)\sqrt{|\Metric(x_0;\gD)\cdot\Metric(x_1;\gD)|}dx_{0}dx_{1} \\
    &= \mathbb{E}_t \,\int_{\R^d \times \R^d}  \| v_{t,\theta}(x_{t,\eta^*}) - \dot{x}_{t,\eta^*} \|^2_{\metric(x_{t,\eta^*})} q(x_0,x_1)dx_{0}dx_{1} \\
    &= \mathbb{E}_{t,(x_0,x_1)\sim q}\left[ \| v_{t,\theta}(x_{t,\eta^*}) - \dot{x}_{t,\eta^*} \|^2_{\metric(x_{t,\eta^*})} \right],
\end{align*}
which is exactly \cref{eq:loss_MFM}. We highlight that for this reason, we slightly abuse notation, and write an expectation with respect to $\hat{q}$, regarded as a density on the manifold $(\R^d,\metric)$, the same as an expectation over the density $q$ with respect to Lebesgue measure.

\xhdr{The matching objective} We also emphasize that, similarly to Riemannian Flow Matching \citep{chen2023riemannian}, we normalize the regression objective by $\Metric^{-1/2}(x;\gD)$ to avoid the need for initialization schemes that are specific to the metric. In fact, introducing a high-dimensional, non-trivial norm $\|\cdot\|_\metric$ in the CFM objective could introduce instabilities in the optimization of CFM. As such, the objective in $\gL_{\rm MFM}$ effectively reduces to $\gL_{\rm CFM}$ with interpolants $x_{t,\eta^*}$ minimizing the geodesic loss $\gL_\metric$. Equivalently, \model consists of a 2-step procedure: we first learn interpolants that approximate the geodesics of a data-dependent metric (\Cref{alg:mfm_geo}), and then we regress the vector field $v_{\theta}$ using the interpolants from the first stage as per the standard CFM objective. A pseudocode for the MFM-pipeline is reported in~\Cref{alg:mfm_vf}.

\xhdr{Inference} We finally note that, in principle, the differential equation generated by $v_\theta$ is defined over the tangent bundle of $(\R^d,\metric)$ and hence solving it through Euler discretization, would require adopting the exponential map associated with the metric $\metric$ to project the tangent vector to the manifold. However, since the underlying space is still $\R^d$, the Euclidean Euler-step integration provides a first-order approximation of the Riemannian exponential associated with $\metric$ (see for example \citet{monera2014taylor}). Accordingly, at inference, we can simply rely on the canonical Euler-discrete integration to approximate the exponential map associated with the data-dependent metric $\metric$, provided that the step size is small. 

\begin{algorithm}[H]
\caption{Pseudocode for \model}
\begin{algorithmic}[1]
\Require{coupling $q$, initialized network $v_{t, \theta}$,  \textit{\bf trained} network $\varphi_{t,\eta}$, data-dependent metric $g(\cdot)$}
\While{Training}
    \State Sample $(x_0, x_1) \sim q$ and $t \sim \mathcal{U}(0,1)$
    \State $ x_{t,\eta} \gets (1-t)x_0 + tx_1 + t(1-t)\varphi_{t,\eta}(x_0,x_1)$ \Comment{\cref{eq:trajectory_means}}
    \State $\dot{x}_{t,\eta} \gets x_1 - x_0 + t(1-t)\dot{\varphi}_{t,\eta}(x_0, x_1) + (1-2t)\varphi_{t,\eta}(x_0,x_1)$
    \State $\ell(\theta) \gets \left\|v_{t,\theta}(x_{t,\eta}) - \dot{x}_{t,\eta} \right\|^2_{\metric(x_{t,\eta})}$ \Comment{Estimate of objective $\mathcal{L}_{\rm MFM}(\theta)$ from \cref{eq:loss_MFM}}
    \State Update $\theta$ using gradient $\nabla_{\theta} \ell(\theta)$
\EndWhile
\Return{vector field $v_{t, \theta}$}
\end{algorithmic}
\label{alg:mfm_vf}
\end{algorithm}

\subsubsection{Support of $p_t$}

We conclude this section by justifying our claims about the relation of the support of the probability path $p_t$ and the interpolants. Consider a family of conditional paths $p_t(x|x_0,x_1)$ such that $p_t(x|x_0,x_1)\approx \delta(x - x_t)$, where $x_t$ are the interpolants connecting $x_0$ to $x_1$, and $\delta$ is the Dirac distribution. Assume that $x_t$ are straight interpolants, i.e. for any $(x_0,x_1)\sim q$, we define $x_t = tx_1 + (1-t)x_0$. Let us now consider $y\in \R^d$ such that there is no $ (x_0,x_1)\sim q:\,\, y = tx_1 + (1-t)x_0$. Accordingly:
\[
p_t(y) := \int_{\R^d\times \R^d} p_t(y|x_0,x_1)q(x_0,x_1)dx_0 dx_1 =  \int_{\R^d\times \R^d} \delta(y - x_t)q(x_0,x_1)dx_0 dx_1 = 0, 
\]
since there is no $(x_0,x_1)$ in the support of $q$, such that $y = x_t$. We have then shown that 
\[
{\rm supp}(p_t) \subset \{y\in \R^d: \exists\, (x_0,x_1)\sim q:\,\, y = tx_1 + (1-t)x_0\},
\]
which can be limiting for tasks where the data induce a nonlinear geometry, as validated in \Cref{sec:exp}.

\section{Additional Details on the Riemannian Metrics used}\label{app:sec_metrics}

In this Section, we provide further details on the diagonal metrics introduced in \Cref{sec:metric}, which we adopt in \model. In particular, we comment on important differences between the LAND metric and the RBF metric. We finally derive an explicit connection between MFM and recent methods that learn interpolant minimizing generalized energies.

\xhdr{Learning the RBF metric $\rbf$} For the RBF metric in \eqref{eq:metric_formulation}, we follow the metric design of \cite{arvanitidis2021geometrically} and find the centroids by performing k-means clustering. Similarly, we define the bandwidth \(\lambda_k\) associated with cluster \(C_k\) as:
\[
\lambda_k = \frac{1}{2}\left(\frac{\kappa}{|C_k|}\sum_{x \in C_k} \|x - \hat{x}_k\|^2\right)^{-2},
\]
where $\kappa$ is a tunable hyperparameter. We note that the bandwidth is chosen to assign smaller decay in \eqref{eq:metric_formulation} to the centroids of clusters that have high spread to better enable attraction of trajectories. Finally, the weights \(\omega_{\alpha, k}\) are determined by training the loss function,
\[
\mathcal{L}_{\rm RBF}(\{\omega_{\alpha,k}\}) = \sum_{x_i \in \gD} \Big(1 - \tilde{h}_\alpha(x_i) \Big)^2 = \sum_{x_i \in \gD} \Big(1 - \sum_{k=1}^K \omega_{\alpha,k}\exp\Big(-\frac{\lambda_{\alpha,k}}{2}\|x_i - \hat{x}_k\|^2\Big)  \Big)^2.
\]

Two important comments are in order. First, by learning the RBF metric, the framework MFM effectively entails jointly learning a data-dependent Riemannian metric {\em and} a suitable matching objective along approximate geodesics $x_{t,\eta}$. Second, we note that more general training objectives can be adopted for $\tilde{h}$, to enforce specific properties for the metric that are task-aware. We reserve the exploration of this topic for future work.

\xhdr{LAND vs RBF metrics} While similar in spirit, since they both assign a lower cost to regions of space close to the support of the data points $\gD$, the LAND metric \citep{arvanitidis2016locally} and the RBF metric \citep{arvanitidis2021geometrically} differ in two fundamental aspects. Crucially, RBF is learned based on the data points, requiring less tuning than LAND, which is a key advantage in high-dimensions and the main motivation for why we resort to RBF for experiments on images and single-cell data with more than 50 principal components. Additionally, the RBF metric assigns similar cost to regions with data and is, in principle, more robust than the LAND metric to variations in the concentration of samples in $\gD$. While this is neither a benefit nor a downside in general, we observe that if a metric such as LAND consistently assigns much lower cost to regions of space with higher data concentration, then the geodesic objective in \cref{eq:geometric_loss} could always bias to learn interpolants $x_{t,\eta}$ moving through these regions independent of the starting point---this is a consequence of the fact that we never compute the actual length of the paths to avoid simulations. Note though that this has not been observed as an issue in practice whenever we adopted the LAND metric for low-dimensional data.

\subsection{Connection between Riemmanian approach and data potentials}{\label{app:sec:potentials}}

We detail how the geometric loss \eqref{eq:geometric_loss} used to learn interpolants $x_{t,\eta}$ can be recast as a generalization of the GSBM framework in \citet{liu2023generalized}. In general, for any data-dependent metric $\metric$, we can indeed rewrite $\gL_\metric(\eta)$ in \cref{eq:geometric_loss} as:
\begin{align*}
    \gL_\metric(\eta) &= \mathbb{E}_{t,(x_0,x_1)\sim q}[\langle \dot{x}_{t,\eta}, \Metric(x_{t,\eta};\gD)\dot{x}_{t,\eta}\rangle] \\ 
    &=\mathbb{E}_{t,(x_0,x_1)\sim q}\left[ \| \dot{x}_{t,\eta}\|^2 + \langle\dot{x}_{t,\eta}, (\Metric(x_{t,\eta};\gD) - \mathbf{I})\dot{x}_{t,\eta}\rangle\right] 
    \\
    &= \mathbb{E}_{t,(x_0,x_1)\sim q}\left[ \| \dot{x}_{t,\eta}\|^2 + V_{t,\eta}(x_{t,\eta},x_0,x_1)\right],
  \end{align*}
where the parametric potential $V_{t,\eta}$ has the form 
\begin{align*}
V_{t,\eta}(x_{t,\eta}, x_0, x_1) &=   
    \langle \dot{x}_{t,\eta}, (\Metric(x_{t,\eta};\gD) - \mathbf{I})\dot{x}_{t,\eta}\rangle \\
    \dot{x}_{t,\eta} &= x_1 - x_0 + t(1-t)\dot{\varphi}_{t,\eta}(x_0, x_1) + (1-2t)\varphi_{t,\eta}(x_0,x_1),
\end{align*}
where we have used the parameterization of $x_{t,\eta}$ in \cref{eq:trajectory_means}. By replacing $\Metric(x;\gD)$ with the explicit expression given by the RBF metric in \cref{eq:metric_formulation}, this provides a concrete formulation for \cref{eq:potential_reformulation}. We also note that an equivalent perspective amounts to replacing the parametric potential with a {\em fixed} function $\gU_t(x_{t,\eta}, \dot{x}_{t,\eta})$ that depends not only on $x_{t,\eta}$ as the potentials in \citet{liu2023generalized}, but also on the velocities $\dot{x}_{t,\eta}$. Once again, we emphasize that our framework prescribes explicit parametric potentials $V_{t,\eta}$ via the diagonal Riemannian metrics in \Cref{sec:metriclearn} and hence differs from \citet{liu2023generalized} which leave to the user the task of designing potentials based on applications.  



\section{Experimental Details}\label{app:sec_exp_details}

We parameterize the models \( \varphi_{t,\eta}(x_0, x_1) \) in \cref{eq:trajectory_means} and \( v_{t,\theta}(x_t) \) in \cref{eq:loss_MFM} as neural networks, and train them {\bf separately and sequentially}. We start with \( \varphi_{t,\eta}(x_0, x_1) \), so that interpolants are learned prior to regressing \( v_{t,\theta}(x_t) \) in the matching objective. Synthetic Arch, Sphere, LiDAR, and single-cell experiments were run on a single CPU. A single run of the LiDAR architecture and 5-dimensional single-cell experiments trains in under 10 minutes, while higher-dimensional single-cell experiments typically train in under an hour. The unpaired translation experiment on AFHQ was trained on a GPU cluster with NVIDIA A100 and V100 GPUs. Training time for AFHQ varies by GPU, ranging from 12 hours (A100) to 1 day (V100).

\subsection{Synthetic Arch, Sphere and LiDAR experiments}

In the synthetic Arch, Sphere and LiDAR experiments, we parameterized both \( \varphi_{t,\eta}(x_0, x_1) \) and \( v_{t,\theta}(x_t) \) as 3-layer MLP networks with a width of 64 and SeLU activation. The networks were trained for up to 1000 epochs with early stopping (patience of 3 based on validation loss). We employed the Adam optimizer \cite{kingma2014adam} with a learning rate of 0.0001 for \( \varphi_{t,\eta}(x_0, x_1) \) and the AdamW optimizer \cite{loshchilov2017decoupled} with a learning rate of \(10^{-3}\) and a weight decay of \(10^{-5}\) for \( v_{t,\theta}(x_t) \). We used a 90\%/10\% train/validation split. Training samples served as source and target distributions and for calculating the LAND metric, while validation samples were used for early stopping. We used \(\sigma = 0.125\) and \(\epsilon = 0.001\) in the LAND metric. During inference, we solved for \( p_t \) using the Euler integrator for 100 steps.

\paragraph{Synthetic Arch} To generate the Arch dataset, we follow the experimental setup from \citep{tong2020trajectorynet}. We sampled 5000 points from two half Gaussians \( N(0, \frac{1}{2\pi}) \) and  \( N(1, \frac{1}{2\pi}) \). The exact optimal transport interpolant at \( t = \frac{1}{2} \) was used as the test distribution. We then embedded the points on a half circle of radius 1 with noise \( N(0, 0.1) \) added to the radius. The Earth Mover's Distance between the sampled and test sets was averaged across five independent runs.

\paragraph{Synthetic Sphere}To generate the Sphere dataset, we sampled 5,000 points. We sampled the latitudes from two half Gaussians \( N(0, \frac{1}{2\pi}) \) and \( N(1, \frac{1}{2\pi}) \), and then scaled them by \(\pi\). We sampled longitudes uniformly. The exact optimal transport interpolant at \( t = \frac{1}{2} \) was used as the test distribution. We then embedded the points on a sphere of radius 1 without any noise. The Earth Mover's Distance between the sampled and test sets, and the mean distance of the middle points of trajectories to the sphere (both reported in Appendix~\ref{app:mfm_on_the_sphere}) were averaged across five independent runs.

\paragraph{LiDAR} For LiDAR, the data consists of point clouds within \([-5, 5]^3 \subset \mathbb{R}^3\), representing scans of Mt. Rainier \citep{legg}. The source and target distributions are generated as Gaussian Mixture Models and projected onto the LiDAR manifold as in \cite{liu2023generalized}. We then standardized all the LiDAR, source, and target points.

\subsection{Unpaired translation in latent space}
In the unpaired translation experiments, we utilized the U-Net architecture setup from \cite{dhariwal2021diffusion} for both \( \varphi_{t,\eta}(x_0, x_1) \) and \( v_{t,\theta}(x_t) \). The exact hyperparameters are reported in Table \ref{tab:unet_params}. We used the Adam optimizer \cite{kingma2014adam} for both networks and applied early stopping only for \( \varphi_{t,\eta}(x_0, x_1) \) based on training loss. During inference, we solved for \( p_t \) at \( t = 1 \) using the adaptive step-size solver Tsit5 with 100 steps. We trained the RBF metric with \(\kappa = 0.5\) and \(\epsilon = 0.0001\) with k-means clustering, as described in Appendix \ref{app:sec_metrics}. For this experiment, we enforce stronger bending by using \((\tilde{h}_\alpha(x))^8\), in the loss function $\gL_{\rbf}(\eta)$ in \eqref{eq:rbf_metric_loss}. Note that higher powers of $\tilde{h}$ ensures that regions away from the support of $\gD$, where $\tilde{h} < 1$, are penalized even more in the geodesic objective, which highlights the role played by the biases introduced via the metric.

Our method operates in the latent space of the Stable Diffusion v1 VAE \cite{kingma2013auto, rombach2022high}, except for the k-means clustering step in RBF metric pretraining, which was performed in the ambient space.

\begin{table}[!h]
\centering
\caption{U-Net architecture hyperparameters for unpaired image translation on AFHQ.}
\begin{tabular}{@{}llc@{}}
\toprule
\textbf{} & \( \varphi_{t,\eta}(x_0, x_1) \) & \( v_{t,\theta}(x_t) \) \\ \midrule
\textbf{Channels} & 128 & 128 \\
\textbf{ResNet blocks} & 2 & 4 \\
\textbf{Channels multiple} & 1, 1 & 2, 2, 2 \\
\textbf{Heads} & 1 & 1 \\
\textbf{Heads channels} & 64 & 64 \\
\textbf{Attention resolution} & 16 & 16 \\
\textbf{Dropout} & 0 & 0.1 \\
\textbf{Batch size} & 256 & 256 \\
\textbf{Epochs} & 100 & 8k \\
\textbf{Learning rate} & 1e-4 & 1e-4 \\
\textbf{EMA-decay} & 0.9999 & 0.9999 \\ \bottomrule
\end{tabular}
\label{tab:unet_params}
\end{table}

\paragraph{Dataset} We used the Animal Face dataset from \cite{choi2020stargan}, adhering to the splitting predefined by dataset authors for train and validation sets, with validation treated as the test set. Standard preprocessing was applied: upsizing to 313x256, center cropping to 256x256, resizing to 128x128, and using VAE encoders for preprocessing. Finally, we computed all embeddings using pretrained Stable Diffusion v1 VAE~\cite{rombach2022high}. FID \cite{heusel2017gans} and LPIPS \cite{zhang2018unreasonable} were computed using the validation sets. FID was measured with respect to the cat validation set, while LPIPS \cite{zhang2018unreasonable} between pairs of source dogs and generated cats.

\subsection{Trajectory inference for single-cell data}

We performed both low-dimensional and high-dimensional single-cell experiments following the setups in \cite{tong2023simulationfree, tong2023improving}. For each experiment, the single-cell datasets were partitioned by excluding an intermediary timepoint, resulting in multiple subsets. Independent models were then trained on each subset. Test metrics were calculated on the left-out marginals and averaged across all model predictions.

We employed the Adam optimizer \cite{kingma2014adam} with a learning rate of \(10^{-4}\) for \( \varphi_{t,\eta}(x_0, x_1) \) and the AdamW optimizer \cite{loshchilov2017decoupled} with a learning rate of \(10^{-3}\) and a weight decay of \(10^{-5}\) for \( v_{t,\theta}(x_t) \). During inference, we solved for \( p_t \) at \( t \) being left-out marginal using the Euler integrator for 100 steps.

We used a 90\%/10\% train/validation split, excluding left-out marginals from both sets. Training samples served as source and target distributions and for calculating the metrics, while validation samples were used for early stopping. We note that these settings are slightly more restrictive (and realistic) than those reported for SF\(^2\)M-Geo \cite{tong2023simulationfree}, where the left-out timepoint was also included in the validation set.

\paragraph{Embryoid Body dataset} We used the Embryoid Body (EB) data \cite{moon2019visualizing} preprocessed by \cite{tong2020trajectorynet}, focusing on the first five whitened dimensions. The dataset, consisting of five time points over 30 days, was used to train separate models across the full-time scale, each time leaving out one of the time points 1, 2, or 3.

\paragraph{Cite and Multi datasets} We utilized the Cite and Multi datasets from the Multimodal Single-cell Integration Challenge at NeurIPS 2022 \cite{lance2022multimodal}, preprocessed by \cite{tong2023simulationfree}. These datasets include single-cell measurements from CD4+ hematopoietic stem and progenitor cells for 1000 highly variable genes across four time points (days 2, 3, 4, and 7). We trained separate models each time leaving out one of the time points 3 or 4.  The data was whitened only for 5-dimensional experiments.

\paragraph{Multiple constraints setting}

Following a similar approach to \cite{neklyudov2023computational}, we modified our sampling procedure to interpolate between two intermediate dataset marginals, with neural network parameters \(\eta\) shared across timesteps. The interpolation is defined as:
\begin{equation}\label{eq:app_multiple_timepoints}
x_t = \frac{t_{i+1} - t}{t_{i+1} - t_i} x_{t_i} + \frac{t - t_i}{t_{i+1} - t_i} x_{t_{i+1}} + \left( 1 - \left( \frac{t_{i+1} - t}{t_{i+1} - t_i} \right)^2 - \left( \frac{t - t_i}{t_{i+1} - t_i} \right)^2 \right) \varphi_{t,\eta}(x_0, x_1).
\end{equation}

\paragraph{Low-Dimensional experiments} We parameterized both \( \varphi_{t,\eta}(x_0, x_1) \) and \( v_{t,\theta}(x_t) \) as 3-layer MLP networks with a width of 64 and SeLU activation. In the LAND metric, we used \(\sigma = 0.125\) and \(\epsilon = 0.001\) for all EB sets and the first leave-out time point in the Cite and Multi datasets. For the second leave-out time point in Cite and Multi, we used \(\sigma = 0.25\) and \(\epsilon = 0.001\).

\paragraph{High-Dimensional experiments} Both \( \varphi_{t,\eta}(x_0, x_1) \) and \( v_{t,\theta}(x_t) \) were parameterized as 3-layer MLP networks with a width of 1024 and SeLU activation. We set \(\kappa = 1.5\) and \(\epsilon\) to be the complement of the final metric pretraining loss to maintain consistent regularization across datasets and leave-out timesteps.

\paragraph{Baselines} We reproduced the results reported by \cite{neklyudov2023computational} to ensure consistent reporting of standard deviations and the same versions of EB dataset used across all experiments. The standard deviations were calculated across all leave-out timesteps and seeds for each dataset and dimension. We used the provided code and hyperparameters for training, averaging the results across 5 seeds.

\section{Supplementary Single-Cell Reconstruction}\label{app:singlecell}

We report additional results for single-cell reconstruction using 50 principal components in \Cref{tab:50_dim}. Again, we note that OT-MFM performs strongly, and it is marginally surpassed only by SF$^2$ on Multi(50D) (results have not been reproduced). Two important remarks are in order. First, the baseline SF$^2$ M-Geo leverages a geodesic cost in the formulation of the Optimal Transport coupling, which enforces similar biases to OT-MFM. In fact, as argued in \Cref{sec:metric}, OT-MFM can similarly consider data-aware costs in the formulation of the optimal transport coupling. In this work though, we wanted to focus on the ability of interpolants to lead to meaningful matching in settings where the optimal transport coupling is agnostic of the data support. Additionally, we highlight that as we move to more realistic high-dimensional settings (100 principal components, shown in \Cref{tab:100_dim}) the advantages of our framework become even more apparent.

\begin{table}[!h]
\centering
\caption{Wasserstein-1 distance averaged over left-out marginals ($\downarrow$ better) for 50-dim PCA representation of single-cell data for corresponding datasets. Results are averaged over 5 independent runs. }\label{tab:50_dim}
\begin{tabular}{@{}lcc@{}}
\toprule
Method                         
                                      & Cite (50D)                 & Multi (50D) \\
\midrule

SF\(^2\) M-Geo  & 38.524 \(\pm\) 0.293 & \textbf{44.795 \(\pm\) 1.911} \\
SF\(^2\) M-Exact  & 40.009 \(\pm\) 0.783 & 45.337 \(\pm\) 2.833 \\ 
\midrule
OT-CFM          & 38.756 \(\pm\) 0.398& 47.576 \(\pm\) 6.622 \\
I-CFM             & 41.834 \(\pm\) 3.284 & 49.779 \(\pm\) 4.430 \\
WLF-UOT& 37.007 \(\pm\) 1.200& 46.286 \(\pm\) 5.841 \\
WLF-SB& 39.695 \(\pm\) 1.935& 47.828 \(\pm\) 6.382 \\
WLF-OT  & 38.352 \(\pm\) 0.203& 47.890 \(\pm\) 6.492\\
\midrule
$\OTRBF$                   &    \textbf{36.394  \(\pm\)  1.886}&     45.16   \(\pm\) 4.96 \\
\bottomrule
\end{tabular}
\end{table}

\newpage
\section{Supplementary Unpaired Translation Results}\label{app:supp_unpaired_translation}\begin{figure}[h!]
    \centering
    \begin{minipage}{0.75\textwidth} 
        \centering
        \includegraphics[width=\textwidth]{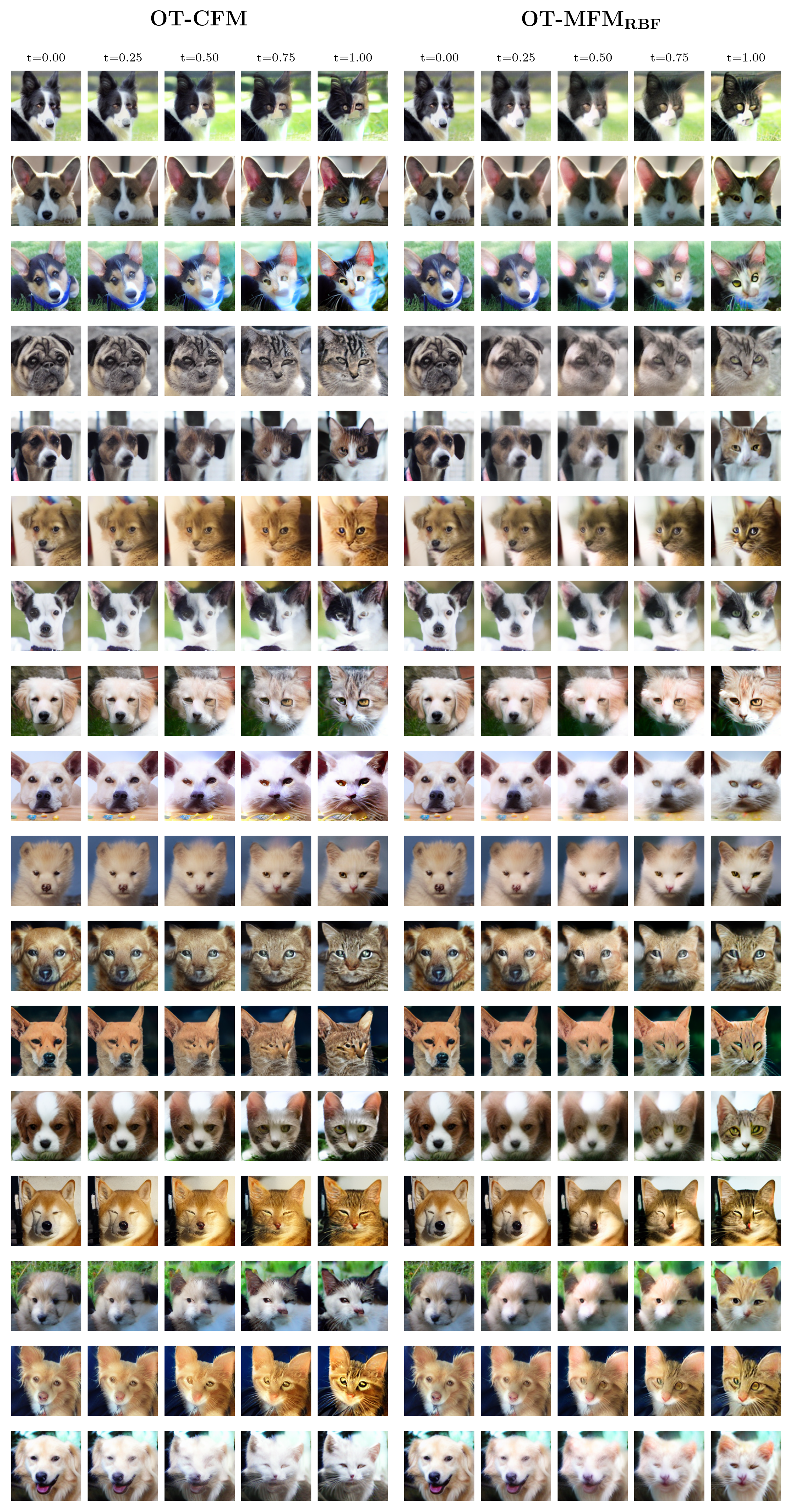}
        \label{fig:unpaired_translation_comparison_appenid}\vspace{-1em}
        \caption{Additional qualitative comparison for the task of unpaired translation between OT-CFM and $\OTRBF$.}
    \end{minipage}
\end{figure}

\newpage

\section{Supplementary Sphere Results}\label{app:mfm_on_the_sphere}

\begin{wrapfigure}{r}{0.5\textwidth}
    \centering
    \vspace{-1em}
    \begin{minipage}{0.45\textwidth}
        \centering
        \includegraphics[width=0.75\linewidth, trim={0 50mm 0 0}, clip]{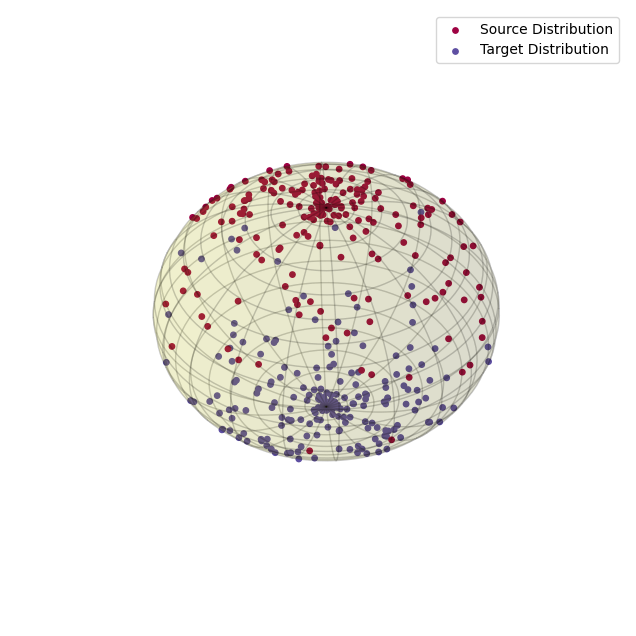}
        \captionof{figure}{Problem Setup}
        \label{fig:app_sphere_settings}
        \vspace{-2em}
    \end{minipage}
\end{wrapfigure}
We visualize the problem setting in Figure~\ref{fig:app_sphere_settings}. We report the Earth Mover's Distance between the sampled and test sets, as well as the mean distance of the middle points of trajectories to the sphere, to quantitatively compare CFM and MFM on the task. MFM improves significantly over the Euclidean baseline, CFM (Table~\ref{subfig:test_emd}). Furthermore, the samples generated by MFM at intermediate times are much closer to the underlying sphere than the Euclidean counterparts (Table~\ref{subfig:mean_distance}).

\begin{table}[!h]
    \centering
    \begin{minipage}{0.48\textwidth}
        \centering
        \caption{Mean Distance of reconstructed trajectories at time $1/2$ from the sphere. Results averaged over 5 runs.}
        \label{subfig:mean_distance}
        \begin{tabular}{@{}lc@{}}
            \toprule
            Method                      & Distance from Sphere ($\downarrow$)\\ 
            \midrule
            OT-CFM                      & 0.519 $\pm$ 0.002 \\
            $\OTLAND$  & 0.085 $\pm$ 0.005 \\
            \bottomrule
        \end{tabular}
    \end{minipage}
    \hfill 
    \begin{minipage}{0.48\textwidth}
        \centering
        \caption{Wasserstein-1 distance between reconstructed marginal at time $1/2$ and ground-truth. Results averaged over 5 runs.}
        \label{subfig:test_emd}
        \begin{tabular}{@{}lc@{}}
            \toprule
            Method                      & EMD ($\downarrow$) \\ 
            \midrule
            OT-CFM                      & 0.525 $\pm$ 0.003 \\
            $\OTLAND$  & 0.340 $\pm$ 0.074 \\
            \bottomrule
        \end{tabular}
    \end{minipage}
    \label{fig:comparative_results}
\end{table}

\section{Supplementary LiDAR Visualizations}\label{app:supp_lidar}
~~
\begin{figure}[!h]
    \centering
    \begin{minipage}{0.24\textwidth}
        \centering
        \includegraphics[width=\linewidth, trim={20mm 25mm 20mm 25mm}, clip]{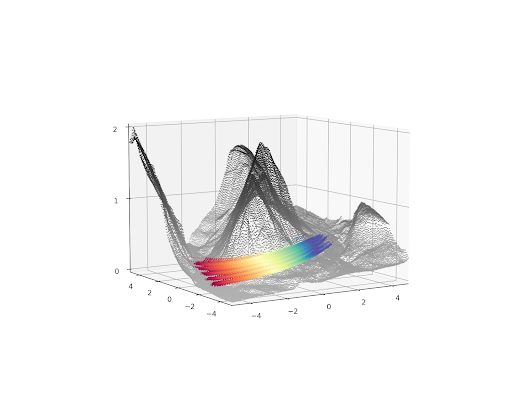}
    \end{minipage}
    \hfill
    \begin{minipage}{0.24\textwidth}
        \centering
        \includegraphics[width=\linewidth, trim={20mm 25mm 20mm 25mm}, clip]{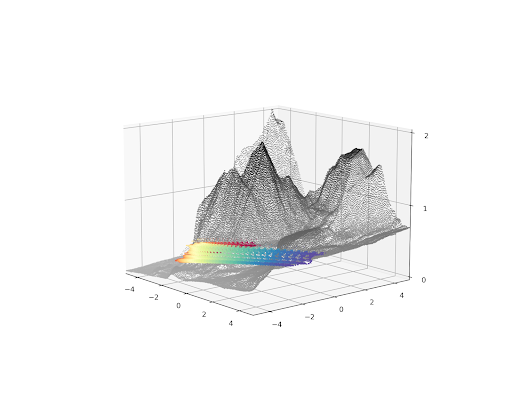}
    \end{minipage}
    \hfill
    \begin{minipage}{0.24\textwidth}
        \centering
        \includegraphics[width=\linewidth, trim={20mm 25mm 20mm 25mm}, clip]{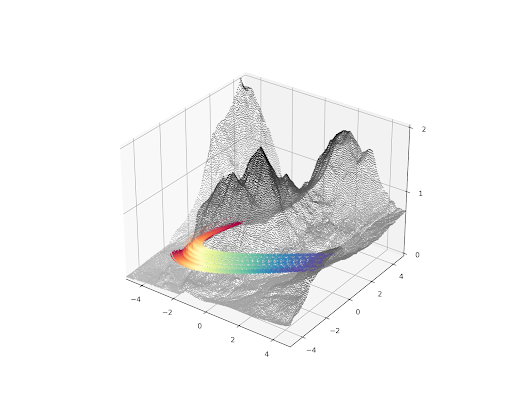}
    \end{minipage}
    \hfill
    \begin{minipage}{0.24\textwidth}
        \centering
        \includegraphics[width=\linewidth, trim={20mm 25mm 20mm 25mm}, clip]{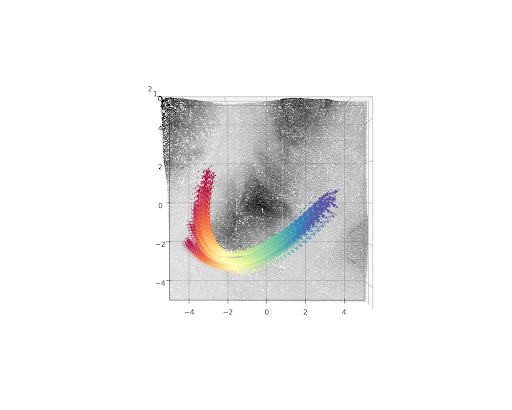}
    \end{minipage}
    \caption{Supplementary Visualizations of MFM Interpolants on LiDAR}
\end{figure}

\clearpage


\end{document}